\documentclass[letterpaper]{article}
\usepackage[top=1.5in, bottom=1.5in, left=1.20in, right=1.20in]{geometry}

%

\PassOptionsToPackage{numbers}{natbib}



\pdfoutput=1
\usepackage{hyperref}       
\usepackage{url}            
\usepackage{booktabs}       
\usepackage{amsfonts}       
\usepackage{nicefrac}       
\usepackage{microtype}      
\usepackage{natbib}

\usepackage{url}

\usepackage{dsfont} 
\usepackage{xspace, amsmath, amssymb, amsthm, bbm, bm}
\usepackage{color}
\usepackage{fancyvrb}

\usepackage{pdfsync}
\synctex=1
\pdfoutput=1

\usepackage{graphicx}
\usepackage{tabularx}

\usepackage{algorithm}
\usepackage{algorithmic}

\usepackage{multirow}
\usepackage{hhline}

\usepackage{booktabs}

\usepackage{capt-of}

\newtheorem{lem}{Lemma}
\newtheorem{thm}[lem]{Theorem}
\newtheorem{cor}[lem]{Corollary}
\newtheorem{defn}[lem]{Definition}

\def\bfu{{{\mathbf u}}}



\def\eps{\ensuremath{\epsilon}\xspace}

%
%



\allowdisplaybreaks 


 
\def\one{\ensuremath{\mathds{1}\xspace}} 
 
\def\larrow{\ensuremath{\leftarrow}\xspace} 
\def\rarrow{\ensuremath{\rightarrow}\xspace}

\def\T{\ensuremath{\top}}  
\def\sig{\ensuremath{\sigma}\xspace}

\def\eps{\ensuremath{\epsilon}\xspace}

\newtheorem{assump}{Assumption}




\def\bfpi{{\ensuremath{\bm{\pi}}\xspace}}

\def\dt{{\ensuremath{\delta}\xspace} }
\def\sm{{\ensuremath{\setminus}\xspace} }


\def\sig{\ensuremath{\sigma}\xspace}

\usepackage[export]{adjustbox}


\def\lfl{\lfloor} 
\def\rfl{\rfloor}


\usepackage{pbox} 

\newcommand{\fr}[2]{ { \frac{#1}{#2} }}
\def\lt{\left}
\def\rt{\right}
\def\gam{{\ensuremath{\gamma}\xspace} }

 
\makeatletter
\newcommand{\vast}{\bBigg@{3}}
\newcommand{\Vast}{\bBigg@{4}}
\makeatother

\def\bfl{{{\boldsymbol \ell}}}
\def\dsR{{{\mathds{R}}}}


\def\w{{{\mathbf w}}}
\def\x{{{\mathbf x}}}

\def\z{{{\mathbf z}}}
\def\la{{\langle}}
\def\ra{{\rangle}}

\def\calL{\ensuremath{\mathcal{L}}\xspace}


\usepackage{pifont}
\newcommand{\cmark}{\ding{51}}%
\newcommand{\gyes}{{\color[rgb]{0,.8,0}\cmark}}
\newcommand{\xmark}{\ding{55}}%
\newcommand{\rno}{{\color[rgb]{.8,0,0}\xmark}}

\def\bfell{{{\boldsymbol\ell}}}
\def\p{{{\mathbf p}}}
 
\def\cD{\ensuremath{\mathcal{D}}\xspace} 
\def\cW{\ensuremath{\mathcal{W}}\xspace} 
 
\def\e{{{\mathbf e}}}

\def\w{{{\mathbf w}}}

\def\KL{\ensuremath{\text{\normalfont{KL}}}\xspace}  
\def\cB{\ensuremath{\mathcal{B}}\xspace} 
\def\cN{\ensuremath{\mathcal{N}}\xspace} 
\def\cI{\ensuremath{\mathcal{I}}\xspace} 
\def\cA{\ensuremath{\mathcal{A}}\xspace} 
\def\cJ{\ensuremath{\mathcal{J}}\xspace} 
\def\cM{\ensuremath{\mathcal{M}}\xspace}

\def\polylog{\ensuremath{\text{\normalfont{polylog}}}}  


\newcommand{\blue}[1]{{\color[rgb]{.3,.3,1}#1}}
\usepackage{enumitem}


\def\hatp{\ensuremath{\widehat p}}
\def\hatbfp{\ensuremath{\widehat{\mathbf{p}}}}

\def\bfcI{\ensuremath{\boldsymbol{\mathcal{I}}}} 
\def\Active{\ensuremath{\text{\normalfont Active}}}

\def\CBCE{\ensuremath{\text{\normalfont CBCE}}}
\def\Regret{\ensuremath{\text{\normalfont Regret}}}
\def\Wealth{\ensuremath{\text{\normalfont Wealth}}}
\def\CB{\ensuremath{\text{\normalfont CB}}}
\def\SARegret{\ensuremath{\text{\normalfont SA-Regret}}}
\def\OGD{\ensuremath{\text{\normalfont OGD}}}


\def\M{\ensuremath{\mathbf{M}}} 
\def\dsN{{{\mathds{N}}}}

\def\bfu{{{\mathbf u}}}
\def\bfl{{{\boldsymbol \ell}}}

\def\barG{{\overline{G}}}
\def\barZ{{\overline{Z}}}
\def\tilL{\ensuremath{\widetilde{L}}}
\def\cP{\ensuremath{\mathcal{P}}\xspace} 

\usepackage{mathtools}
\usepackage[T1]{fontenc}
\usepackage[utf8]{inputenc}
\usepackage{lmodern}
\usepackage{anyfontsize}

\usepackage{wrapfig}
\usepackage{lipsum}
\usepackage{microtype}      

\pdfsuppresswarningpagegroup=1 

\newcommand{\revision}[1]{{#1}}

\title{Online Learning for Changing Environments using Coin Betting}

%

\author{
  \begin{tabular}{cc}
  Kwang-Sung Jun & Francesco Orabona\\
  UW-Madison & Stony Brook University \\
  \texttt{kjun@discovery.wisc.edu} & \texttt{francesco@orabona.com} \\
   & \\
  Stephen Wright & Rebecca Willett\\
  UW-Madison & UW-Madison\\
  \texttt{swright@cs.wisc.edu} & \texttt{willett@discovery.wisc.edu} \\
  \end{tabular}
}

\begin{document}

\setlength{\abovedisplayskip}{5pt}
\setlength{\belowdisplayskip}{4pt}
\setlength{\abovedisplayshortskip}{5pt}
\setlength{\belowdisplayshortskip}{4pt}



\date{}
\maketitle

\begin{abstract}
A key challenge in online learning is that classical algorithms can be slow to adapt to changing environments.
Recent studies have proposed ``meta'' algorithms that convert any online learning algorithm to one that is adaptive to changing environments, where the adaptivity is analyzed in a quantity called the strongly-adaptive regret.
This paper describes a new meta algorithm that has a strongly-adaptive regret bound that is a factor of $\sqrt{\log(T)}$ better than other algorithms with the same time complexity, where $T$ is the time horizon.
\revision{ We also extend our algorithm to achieve a first-order (i.e., dependent on the observed losses) strongly-adaptive regret bound for the first time, to our knowledge.}
At its heart is a new parameter-free algorithm for the learning with expert advice (LEA) problem in which experts sometimes do not output advice for consecutive time steps (i.e., \emph{sleeping} experts).
This algorithm is derived by a reduction from optimal algorithms for the so-called coin betting problem.
Empirical results show that our algorithm outperforms state-of-the-art methods in both learning with expert advice and metric learning scenarios.
\end{abstract}

\section{Introduction}

Online learning algorithms are typically tailored to stationary
environments, but in many applications the environment is dynamic. In
online portfolio management, for example, stock price trends can vary
unexpectedly, and the ability to track changing trends and adapt to
them are crucial in maximizing profit. In product reviews, words
describing product quality may change over time as products evolve and
the tastes of customers change.  Keeping track of the changes in the
metric describing the relationship between review text and rating is
crucial for improving analysis and the quality of recommendations.

We consider the problem of adapting to changing environments in the
online learning context.  Let $\cD$ be the decision space, $\calL$ be
a family of loss functions that map $\cD$ to $\dsR$, and $T$ be the
target time horizon.  Let $\cA$ be an online learning algorithm.  We
define the online learning protocol in Figure~\ref{fig:ol}.

\begin{figure}[h]
  \begin{center}
\fbox{\begin{minipage}{223pt}
At each time $t = 1,2,\ldots,T$,
\begin{itemize}[topsep=2pt,itemsep=0ex,partopsep=1ex,parsep=1ex]
  \item The learner $\cA$ picks a decision $\x^{\cA}_t \in \cD$.
  \item The environment reveals a loss function $f_t \in \calL$.
  \item The learner $\cA$ suffers loss $f_t(\x^{\cA}_t)$.
\end{itemize}
\end{minipage}}
\end{center}
\vspace{-15pt}
\caption{Online learning protocol}
\label{fig:ol}
\vspace{-5pt}
\end{figure}
The usual goal of online learning is to find a strategy that compares
favorably with the best fixed comparator in a subset $\cW$ of decision
space $\cD$, in hindsight. (Often, $\cW = \cD$.)  Classically, one
seeks a low value of the following (cumulative) \emph{static} regret
objective:
\[
  \mbox{Regret}^\cA_T := \sum_{t=1}^T f_t(\x^\cA_t) - \min_{\w \in \cW} \sum_{t=1}^T f_t(\w) \;.
\]
When the environment is changing, static regret is not a suitable measure, since it compares the learning strategy against a decision that is fixed for all $t$.
We need to make use of stronger notions of regret that allow comparators to change over time.
We introduce the notation $[T] := \{1, \ldots,T\}$ and $[A..B] := \{A, A+1, \ldots, B\}$.
\citet{daniely15strongly} defined {\em strongly-adaptive regret
  (SA-Regret)}, which measures the regret over \emph{any} time interval $I = [I_1..I_2] \subseteq [T]$:
\begin{align}\label{sa-regret}                                  
\mbox{SA-Regret}_T^\cA(I) := \lt( \sum_{t\in I} f_t(\x^\cA_t) -
\min_{\w\in\cW} \sum_{t\in I} f_t(\w) \rt) \;.
\end{align}                                                                                  
Throughout, $I_1$ ($I_2$) denotes the starting (ending) time step of
an interval $I$.  \revision{ We call an algorithm \emph{strongly-adaptive} if it has a \emph{low} value of SA-Regret, by which we
  mean a value $O(\text{polylog}(T) R_{\cP}(I))$, where $R_{\cP}(I)$
  is the minimax static regret of the online learning problem $\cP$
  restricted to interval $I$.}

Let us call $\w_{1:T} := \{\w_1, \ldots, \w_T\}$ an \emph{$m$-shift
  sequence} if it changes at most $m$ times, that is,
$\sum_{j=1}^{T-1} \one\{\w_j \neq \w_{j+1}\} \le m$.  A related notion,
$m$-shift regret~\cite{herbster98tracking}, measures the regret with respect to a comparator that changes at most $m$ times in $T$ time steps.
\vspace{-3pt}
\begin{align*}
  m\mbox{-Shift-Regret}^\cA_T := \sum_{t=1}^T f_t(\x^\cA_t) - \min_{\w_{1:T} \in \cW^{T} \;:\; m\mbox{-shift seq.}} \sum_{t=1}^{T} f_t(\w_t) \;.
\end{align*}
While the $m$-shift regret is more interpretable, SA-Regret is a
stronger notion since \revision{it is well-known} that a tight
SA-Regret bound implies a tight $m$-shift regret
bound~\cite{luo15achieving,daniely15strongly}, as we discuss further
in Section~\ref{sec:saregret}.  \revision{As noted
  by~\cite{zhang17strongly}, SA-Regret has a strong connection to
  so-called dynamic regret (with respect to the temporal variations of the $f_t$'s).  }

\begin{table}
  {\footnotesize\centering
\begin{tabular}{|c|c|c|c|} \hline
Algorithm & SA-Regret order                          & Time factor \\ \hline
FLH~\cite{hazan07adaptive}    & $ \sqrt{T \log T}$                 & $T$         \\  
AFLH~\cite{hazan07adaptive}   & $ \sqrt{T \log T } \log (I_2-I_1)$ & $\log T $   \\  \hline
SAOL~\cite{daniely15strongly} & $ \sqrt{(I_2 - I_1)\log^2 (I_2)}$  & $\log T $   \\  \hline
$\CBCE$ (ours)                         & $ \sqrt{(I_2-I_1)\log (I_2)}$      & $\log T $ \\  \hline
\end{tabular}
\caption{SA-Regret bounds of meta algorithms on $I = [I_1..I_2]
  \subseteq [T]$.  We show the part of the regret due to the meta
  algorithm only, not the black-box.  The last column is the
  multiplicative factor in the time complexity introduced by the meta
  algorithm. CBCE (our algorithm) achieves the best SA-Regret and time
  complexity.}
\label{tab:regret-meta-sa}
}
\vspace{-18pt}
\end{table}

Several generic online algorithms that adapt to changing environments have been proposed recently.
Rather than being designed for a specific learning problem, these are ``meta'' algorithms that take \emph{any} online learning algorithm as a black-box and turn it into an adaptive one.
We summarize the SA-Regret of existing meta algorithms in Table~\ref{tab:regret-meta-sa}.
In particular, the pioneering work of~\citet{hazan07adaptive} introduced \emph{adaptive regret}, a slightly weaker notion than the SA-Regret, and proposed two meta algorithms called Follow-the-Leading-History (FLH) and Advanced FLH (AFLH).\footnote{%
Strongly adaptive regret is similar to the notion of adaptive regret introduced by~\cite{hazan07adaptive}, but emphasizes the dependency on the interval length $|I|$.
}
However, their SA-Regret depends on $T$ rather than $|I|$ and hence can be significantly larger.
The SAOL approach of~\cite{daniely15strongly} improves the SA-Regret to $O\lt(\sqrt{(I_2-I_1)\log^2(I_2)}\rt)$.

\begin{table*}
     {\centering\footnotesize
\begin{tabular}{|c|c|c|c|c|c|c|} \hline
Algorithm   & $m$-shift regret & Running time               & Agnostic to $m$ \\ \hline
Fixed Share~\cite{herbster98tracking,cesa-bianchi12mirror} & $\sqrt{mT(\log N + \log T)}$ & $NT$     & \rno           \\
           & $\sqrt{m^2 T(\log N + \log T)}$ & $NT$ & \gyes           \\ \hline
GeneralTracking$\la$EXP$\ra$~\cite{gyorgy12efficient} & $\sqrt{mT(\log N + m\log^2 T)}$ & $NT\log T$&\gyes\\
  & $\sqrt{mT(\log N + \log^2 T)}$ & $NT\log T$ & \rno\\ 
($\gam\in(0,1)$) & $\sqrt{\fr{1}{\gam}mT(\log N + m\log T)}$ & $NT^{1+\gam}\log T$ & \gyes\\ 
                 & $\sqrt{\fr{1}{\gam}mT(\log N + \log T)}$ & $NT^{1+\gam}\log T$ &\rno\\ \hline
ATV~\cite{luo15achieving} & $\sqrt{mT(\log N + \log T)}$ & $NT^2$ & \gyes       \\ \hline
SAOL$\la$MW$\ra$~\cite{daniely15strongly} & $\sqrt{mT(\log N + \log^2 T)}$ & $NT\log T$        & \gyes       \\ \hline
\CBCE$\la\CB\ra$ (ours)      & $\sqrt{mT(\log N + \log T)}$ & $NT\log T$   & \gyes       \\ \hline
\end{tabular}
\caption{$m$-shift regret bounds of LEA algorithms.  Our proposed
  algorithm (last line) achieves the best regret among those with the
  same time complexity and does not need to know $m$.  Each quantity
  omits constant factors.  ``Agnostic to $m$'' means that an algorithm
  does not need to know the number $m$ of switches in the best expert.}
\label{tab:tracking}
}
\vspace{-18pt}
\end{table*} 

In this paper, we propose a new meta algorithm called {\em Coin
  Betting for Changing Environments} ($\CBCE$) that combines the idea
of ``sleeping experts'' introduced in
\cite{blum97empirical,freund97using} with the Coin Betting (CB)
algorithm~\cite{orabona16from}.  The SA-Regret of CBCE is better by a
factor $\sqrt{\log(I_2)}$ than that of SAOL, as shown in
Table~\ref{tab:regret-meta-sa}.  We present our extension of CB to
sleeping experts and prove its regret bound in
Section~\ref{sec:sleeping}. This result leads to the improved
SA-Regret bound of CBCE in Section~\ref{sec:cbce}.

Our improved bound yields a number of improvements in various online
learning problems. In describing these improvements, we designate by
$\cM\la\cB\ra$ a complete algorithm assembled from meta algorithm
$\cM$ and black-box $\cB$. In this notation, our algorithm is
designated by $\CBCE\la\CB\ra$.

Consider the learning with expert advice (LEA) problem with $N$
experts.  We make comparisons with respect to $m$-shift regret bounds, as many
LEA algorithms provide only bounds of this type. Our algorithm
$\CBCE\la\CB\ra$ has $m$-shift regret
\begin{align*}
  O \lt(\sqrt{mT(\log N + \log T)} \rt)
\end{align*}
and time complexity $O(NT \log T)$.  
This regret is a factor $\sqrt{\log T}$ better than existing algorithms with the same time complexity.  
Although AdaNormalHedge.TV (ATV) and Fixed Share achieve the same regret, the former has larger time complexity, and the latter requires prior knowledge of the number of shifts $m$.
We summarize the $m$-shift regret bounds of various algorithms in Table~\ref{tab:tracking}.
\revision{
We emphasize that the same regret order and time complexity as CBCE$\la$CB$\ra$ can be achieved by combining our proposed CBCE meta algorithm with any blackbox algorithm (e.g., AdaNormalHedge~\cite{luo15achieving}).
}

In online convex optimization with $G$-Lipschitz loss functions over a
convex set $D \in \dsR^d$ of diameter $B$, Online Gradient Descent
(OGD) has regret $O(BG\sqrt{T})$~\cite{ss12online}.  Thus, $\CBCE$
with OGD ($\CBCE\la\text{OGD}\ra$) has the following SA-Regret:
\begin{equation*} \begin{aligned}
O\lt((BG + \sqrt{\log(I_2)})\sqrt{|I|}\rt),
\end{aligned} \end{equation*}
which improves by a factor $\sqrt{\log(I_2)}$ over SAOL$\la$OGD$\ra$.

\revision{
We also propose an improved version of CBCE that has a so-called \emph{first-order} regret bound.
That is, the SA-Regret on an interval $I=\{I_1,\ldots,I_2\}$ scales with $\min_{\w\in\cW} \sum_{t\in I} f_t(\w)$ rather than $|I|$ as follows:
\begin{align*}
  O\lt(\log(I_2) \sqrt{\min_{\w\in\cW} \sum_{t\in I} f_t(\w)} + \polylog(I_2)\rt) \;,
\end{align*}
where we omit the term due to the blackbox algorithm.
We emphasize that, while there is an extra $\sqrt{\log(I_2)}$ factor and additive term, the main quantity $\min_{\w\in\cW} \sum_{t\in [T]} f_t(\w)$ can be significantly smaller than $|I|$ if there exists the decision $\w$ whose loss is very small in $I$.
To our knowledge, this is the first first-order SA-Regret bound in online learning.\footnote{%
  First-order bounds are available for specific online learning problems.
  For LEA, for example, AdaNormalHedge.TV~\cite{luo15achieving} has a first-order regret bound.
}
}


In Section~\ref{sec:expr}, we compare $\CBCE$ empirically to a number
of meta algorithms for changing environments in two online learning
problems: LEA and Mahalanobis metric learning.  We observe that
$\CBCE$ outperforms the state-of-the-art methods in both tasks, thus
confirming our theoretical findings.

\section{Meta Algorithms for Changing Environments}
\label{sec:meta}

Let $\cB$ be a black-box online learning algorithm following the
protocol in Figure~\ref{fig:ol}.  A trick commonly used in designing a
meta algorithm $\cM$ for changing environments is to initiate a new
instance of $\cB$ at every time
step~\cite{hazan07adaptive,gyorgy12efficient,adamskiy12acloser}.  That
is, we run $\cB$ independently for each interval $J$ in $\{[t..\infty]
\mid t = 1,2,\ldots\}$.  Denoting by $\cB_J$ the run of black-box
$\cB$ on interval $J$, a meta algorithm at time $t$ takes a weighted
average of decisions from the runs $\{\cB_J \, : \, t \in J \}$.  The
underlying idea is as follows.  Suppose we are at time $t$ and the
environment has changed at an earlier time $t' < t$. We hope that the
meta algorithm would assign a large weight to the black-box run
$\cB_{J'}$ (where $J' = [t'..\infty]$), since other runs are either
based on data prior to $t'$ or use only a subset of the data generated
since $t'$.  Ideally, the meta algorithm would assign a large weight
to $\cB_{J'}$ {\em soon after time $t'$}, by carefully examining the
online performance of each black-box run.

This schema requires updating of $t$ instances of the black-box
algorithm at each time step $t$, leading to a $O(t)$ multiplicative
increase in complexity over a single run.  This factor can be reduced
to $O(\log t)$ by restarting black-box algorithms on a carefully
designed set of intervals, such as the geometric covering
intervals~\cite{daniely15strongly} (GC) or the data streaming
intervals~\cite{hazan07adaptive,gyorgy12efficient} (DS), which is a
special case of a more general set of intervals considered
in~\cite{veness13partition}.  While both GC and DS achieve the same
goal, as we show in Section~\ref{sec:ds},\footnote{Except for a subtle
  case, which we discuss in Section~\ref{sec:subtle}.  } we use the
former as our starting point for ease of exposition.

\paragraph{Geometric Covering Intervals.}

We define the $\cJ_k$ to be the collection of intervals of length $2^k$:
\[
 \cJ_k := \left\{ [ \lt(i\cdot 2^k\rt) .. \lt((i+1)\cdot2^k - 1\rt) ]: i \in
 \mathds{N} \right\}, \quad \forall k \in \{0,1,\ldots\}.
\]
The geometric covering intervals~\cite{daniely15strongly} are
\begin{align*}
  \cJ := \bigcup_{k\in\{0,1,\ldots\}} \cJ_k \;.
\end{align*}
That is, $\cJ$ is the set of intervals of doubling length, with
intervals of size $2^k$ exactly partitioning the set $\mathds{N} \sm
\{1,\ldots,2^k-1\}$; see Figure~\ref{fig:gc}.

Define the set of intervals that includes time $t$ as follows:
\[
  \mbox{Active}(t) := \{J\in\cJ: t \in J\}\;.
\]
It can be shown that $|\mbox{Active}(t)| = \lfl\log_2(t)\rfl + 1$.
Since at most $O(\log (t))$ intervals contain any given time point
$t$, the time complexity of the meta algorithm is a factor
$O(\log(t))$ larger than that of the black-box $\cB$.

The following Lemma from~\citet{daniely15strongly} shows that an
arbitrary interval $I$ can be partitioned into a sequence of smaller
blocks whose lengths successively double, then successively halve.
This result is key to the usefulness of the geometric covering
intervals.
\begin{lem}\emph{\cite[Lemma~5]{daniely15strongly}} \label{lem:geo-intv}
  Any interval $I \subseteq \mathds{N}$ can be partitioned into two
  finite sequences of disjoint and consecutive intervals, denoted
  $\{J^{(-a)}, J^{(-a+1)}, \ldots, J^{(0)}\}$ and $\{J^{(1)}, J^{(2)},
  \ldots, J^{(b)}\}$ where for all $i \in [(-a)..b]$, we have $J^{(i)}
  \in \cJ$ and $J^{(i)} \subset I$, such that
  \begin{align*}
    |J^{(-i)}| / |J^{(-i+1)}| & \le 1/2, \quad i=1,2,\dotsc,a; \\
    |J^{(i+1)}| / |J^{(i)}| & \le 1/2, \quad i=1,2,\dotsc,b-1~.
  \end{align*}
\end{lem}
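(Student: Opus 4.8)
The plan is to build the ``mountain'' from the top down: first locate its peak, then grow the two flanks outward. Write $I = [I_1..I_2]$ and let $k^\ast$ be the largest integer for which some interval of $\cJ_{k^\ast}$ is contained in $I$ (this is well defined, since every singleton lies in $\cJ_0$, while $2^k > |I|$ forces $\cJ_k$ to miss $I$). The first step is a purely combinatorial fact about $\cJ$: \emph{at most two} intervals of $\cJ_{k^\ast}$ lie inside $I$, and if there are two they are consecutive. Indeed, since $I$ is an interval, the level-$k^\ast$ intervals inside $I$ form a consecutive block; and three in a row always contain two with even leading index, which merge into an interval of $\cJ_{k^\ast+1}$ still inside $I$, contradicting maximality of $k^\ast$. (Likewise, if there are exactly two, the left one has odd leading index, else they would merge.) Call these interval(s) $J^{(0)}$, and $J^{(1)}$ in the two-interval case, listed left to right; this is the peak. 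Note the lemma imposes no size relation between $J^{(0)}$ and $J^{(1)}$, which is precisely the slack that the two-peak case needs.

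Next I would bound the two flanks $I_L := [\,I_1 .. (\text{left endpoint of }J^{(0)})-1\,]$ and $I_R := [\,(\text{right endpoint of the last peak})+1 .. I_2\,]$, showing $|I_L| < 2^{k^\ast}$ and $|I_R| < 2^{k^\ast}$. If instead $|I_L| \ge 2^{k^\ast}$, then the final $2^{k^\ast}$ elements of $I_L$ themselves form a level-$k^\ast$ interval (the one ending just before $J^{(0)}$), which is contained in $I$ and adjacent to $J^{(0)}$ on the left; this produces either a third peak (impossible) or, with $J^{(0)}$, a mergeable pair yielding a $\cJ_{k^\ast+1}$ interval in $I$ (impossible). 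The bound on $|I_R|$ is symmetric.

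Finally I would decompose each flank by greedy peeling, starting from the side abutting the peak. Consider $I_L$: its right endpoint is $(\text{left endpoint of }J^{(0)})-1$, so that endpoint $+1$ is divisible by $2^{k^\ast}$. Peel off the largest dyadic interval ending at that point and still fitting inside $I_L$. Because $|I_L| < 2^{k^\ast}$, this peel is limited by length rather than alignment, hence has size exactly $2^{\lfloor \log_2 |I_L| \rfloor} \le 2^{k^\ast - 1} = |J^{(0)}|/2$. A short $\nu_2$-valuation computation shows that after this peel the remaining interval has length strictly less than the piece just removed, and its right endpoint $+1$ is still divisible by that size, so the same step applies recursively (a one-line induction on $|I_L|$ organizes this cleanly; the flank may also be empty, giving $a=0$). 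Thus the pieces, read from the peak outward, strictly more than halve each step, i.e.\ read toward the peak they form the required doubling sequence $J^{(-a)},\dots,J^{(-1)}$; symmetrically, peeling $I_R$ from its left end produces $J^{(1)}$ (or $J^{(2)}$)$,\dots,J^{(b)}$. Assembling $I = I_L \;\cup\; J^{(0)} \;[\cup\, J^{(1)}]\; \cup\; I_R$ and checking disjointness and containment in $I$ completes the argument.

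I expect the main obstacle to be exactly the greedy-peeling bookkeeping of the last paragraph: for a generic interval, greedily peeling the largest fitting dyadic block does \emph{not} yield a monotone sequence, and it is the bound ``flank length $< 2^{k^\ast}$'' from the second step that makes every peel length-limited and therefore forces the sizes to decrease geometrically. Getting the one-peak versus two-peak split and the valuation accounting exactly right (including the empty-flank cases) is the delicate part; everything else is routine.
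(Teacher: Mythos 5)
The paper does not prove this lemma at all --- it is imported verbatim from \citet{daniely15strongly} as a cited result --- so there is no in-paper argument to compare against; I can only assess your proof on its own terms, and it is correct and self-contained. Your ``peak-first'' organization (isolate the at most two, necessarily adjacent, intervals of $\cJ_{k^\ast}$ contained in $I$ at the maximal scale $k^\ast$; bound both flanks by $2^{k^\ast}$; then peel each flank greedily outward from its peak-adjacent, $2^{k^\ast}$-aligned endpoint) is a legitimate alternative to the usual left-to-right greedy/inductive decomposition, and it has the virtue of making transparent why the lemma imposes no size relation between $J^{(0)}$ and $J^{(1)}$: that slack is exactly the two-peak case. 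The delicate steps all check out. The parity/merging argument correctly shows that three consecutive level-$k^\ast$ blocks would merge into a $\cJ_{k^\ast+1}$ interval inside $I$, and the flank bound follows because an unpeeled level-$k^\ast$ block adjacent to the peak would contradict the peak's definition (with the boundary case where $J^{(0)}$ has leading index $1$, so no level-$k^\ast$ block exists to its left, giving $|I_L|<2^{k^\ast}$ for free). In the peeling phase, since the flank is shorter than $2^{k^\ast}$ and its peak-adjacent endpoint is divisible by $2^{k^\ast}$, each greedy peel is indeed length-limited of size $2^{\lfloor\log_2 m\rfloor}>m/2$, the remainder is strictly shorter than the peeled piece, and the relevant endpoint retains enough $2$-adic divisibility for the recursion. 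The one detail you leave implicit is that each peeled block genuinely belongs to $\cJ$, i.e.\ has leading index at least $1$ (recall level $k$ only starts at position $2^k$); this does hold --- because the remainder always lies in $\{1,2,\ldots\}$, an aligned block of size $2^{j}$ ending at $e$ with $2^{j}\mid e+1$ and $2^{j}\le |$remainder$|$ forces $e+1\ge 2^{j+1}$ --- but it deserves a sentence, since the whole construction lives inside the specific index set of Figure~\ref{fig:gc}.
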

\begin{figure}[t]
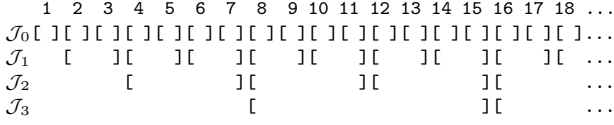

{  \centering
  \begin{minipage}{0.65\textwidth}
{\fontsize{7.3}{9.3}
\begin{Verbatim}[commandchars=\\\{\},codes={\catcode`$=3\catcode`_=8}]
\;   1  2  3  4  5  6  7  8  9 10 11 12 13 14 15 16 17 18 ...
$\cJ_0$[ ][ ][ ][ ][ ][ ][ ][ ][ ][ ][ ][ ][ ][ ][ ][ ][ ][ ]...
$\cJ_1$   [    ][    ][    ][    ][    ][    ][    ][    ][  ...
$\cJ_2$         [          ][          ][          ][        ...
$\cJ_3$                     [                      ][        ...
\end{Verbatim}
}
\end{minipage}
}
\caption{Geometric covering intervals. Each interval is denoted by {\tt [ ]}.}
\label{fig:gc}
\vspace{-20pt}
\end{figure}
%
\paragraph{Regret Decomposition.}
We show now how to use the geometric covering intervals to decompose
the SA-Regret of a complete algorithm $\cM\la\cB\ra$. Using the
notation
\[
  R^\cA_I(\w) := \sum_{t\in I} f_t(\x^\cA_t) - \sum_{t\in I} f_t(\w)\;,
\]
we can restate \eqref{sa-regret} as follows:
\[
  \text{SA-Regret}_T^\cA(I) = \max_{\w\in\cW} R^\cA_I (\w).
\]
Suppose we denote by $\x^{\cB_J}_t$ the decision from black-box run
$\cB_J$ at time $t$ and by $\x^{\cM\la\cB\ra}_t$ the combined decision
of the meta algorithm at time $t$.  Since $\cM\la\cB\ra$ is a
combination of a meta $\cM$ and a black-box $\cB$, its regret depends
on both $\cM$ and $\cB$.  Perhaps surprisingly, we can decompose the
two sources of regret additively through the geometric covering $\cJ$,
as we now describe.  For any $I \subseteq [T]$, let
$\bigcup_{i=-a}^{b} J^{(i)}$ be the partition of $I$ obtained from
Lemma~\ref{lem:geo-intv}.  The regret of $\cM\la\cB\ra$ on $I$ can be
decomposed as follows:
\begin{align}
&R^{\cM\la\cB\ra}_I(\w) \notag\\
&= \sum_{t\in I} \lt(f_t(\x^{\cM\la\cB\ra}_t) - f_t(\w) \rt) \notag \\
&= \sum_{i=-a}^{b} \Bigg(\sum_{t\in J^{(i)}} f_t(\x^{\cM\la\cB\ra}_t) - f_t(\x^{\cB_{J^{(i)}}}_t)
   + f_t(\x^{\cB_{J^{(i)}}}_t) - f_t(\w) \Bigg) \notag\\
&= \underbrace{
      \sum_{i=-a}^{b} \underbrace{ \sum_{t\in J^{(i)}} \lt(f_t(\x^{\cM\la\cB\ra}_t) - f_t(\x^{\cB_{J^{(i)}}}_t) \rt) }_{=:\text{\small  (meta regret on $J^{(i)}$)}}
   }_{=:\text{\small (meta regret on $I$)}}
      + \sum_{i=-a}^{b} \underbrace{ \sum_{t\in J^{(i)}} \lt(f_t(\x^{\cB_{J^{(i)}}}_t) - f_t(\w)\rt) }_{=:\text{\small  (black-box regret on $J^{(i)}$)}} . \label{regret-decomposition}
\end{align}
(We purposely use symbol $J$ for intervals in $\cJ$ and $I$ for a
generic interval that is not necessarily in $\cJ$.)  The black-box
regret on $J=[J_1..J_2]\in \cJ$ is exactly the standard regret for $T
= |J|$, since the black-box run $\cB_J$ was started from time $J_1$.
Thus, in order to prove that a meta algorithm $\cM$ suffers low
SA-Regret, it remains to show two things:
\begin{enumerate}[topsep=0pt,itemsep=0ex,partopsep=1ex,parsep=1ex]
  \item $\cM$ has low regret on interval $J\in\cJ$;
  \item The outer sums over $i$ in~\eqref{regret-decomposition} are
    small for both the meta algorithm and the black-box algorithm.
\end{enumerate}
\citet{daniely15strongly} address the second issue above in their
analysis.  They show that if the black-box regret on $J^{(i)}$ is
$c\sqrt{|J^{(i)}|}$ for some $c$ then the second double summation
of~\eqref{regret-decomposition} is bounded by
$8c\sqrt{|I|}$,\footnote{The argument is essentially the ``doubling
  trick'' described in~\cite[Section~2.3]{cesa-bianchi06prediction}.}
which is perhaps the best one can hope for. The same holds true for
the meta algorithm.  Thus, it remains to focus on the first issue
above. This is our main contribution.

In the next two sections, we describe the design and application of
our meta algorithm.  In Section~\ref{sec:sleeping}, we propose a novel
method that incorporates sleeping experts and the coin betting
framework.  Section~\ref{sec:cbce} describes how our method can be
used as a meta algorithm that has an SA-Regret guarantee.

\section{Coin Betting Meets Sleeping Experts} 
\label{sec:sleeping}

\revision{ 
  Our meta algorithm CBCE extends the coin-betting framework~\cite{orabona16from} to a variant of the learning with expert advice (LEA) problem called ``sleeping experts''~\cite{blum97empirical,freund97using}.
  CBCE is parameter-free (there is no explicit learning rate) and has near-optimal regret.
  Our construction below has further interest as a near-optimal solution for the sleeping bandits problem.
}

%

\paragraph{Sleeping Experts.}
In the LEA framework, the decision set is $\cD = \Delta^N$, an
$N$-dimensional probability simplex of weights assigned to the various
experts. To distinguish LEA from the general online learning problem,
we use notation $\p_t$ in place of $\x_t$, and $h_t$ in place of
$f_t$.  Denoting by $\bfell_t := (\ell_{t,1}, \ldots, \ell_{t,N})^\T
\in [0,1]^{N}$ the vector of loss values of experts at time $t$
provided by the environment, the learner's loss function is $h_t(\p)
:= \p^\T \bfell_t$.


Since $\p \in \cD$ is a probability vector, the learner's decision can
be viewed as hedging between the $N$ alternatives.  Let $\e_i$ be an
indicator vector for dimension $i$; e.g., $\e_2 =
(0,1,0,\ldots,0)^\T$.  In this notation, the comparator set $\cW$ is
$\{\e_1, \ldots, \e_N\}$, that is, the learner competes with a
strategy that commits to a single expert for the entire time interval
$[1..T]$.\footnote{The decision set may be nonconvex, or even
  discrete, for example, $\cD = \{\e_1, \ldots, \e_N\}$
  \cite[Section~3]{cesa-bianchi06prediction}.  In this discrete case,
  no hedging is allowed; the learner must pick a single expert for the
  entire interval.  To choose an element of this set, one could first
  choose an element $\p_t$ from $\Delta^N$, then choose a decision
  $\e_i \in \cD$ with probability $p_{t,i}$.  The regret guarantee for
  such a scheme is the same as for the standard LEA, but with
  \emph{expected} rather than deterministic regret.}

Recall that each black-box run $\cB_J$ is on a different interval $J$.
The meta algorithm's role is to hedge bets over multiple black-box
runs.  Thus, it is natural to treat each run $\cB_J$ as an
\emph{expert} and use an LEA algorithm to combine decisions from each
expert $\cB_J$.  The loss incurred on run $\cB_J$ is $\ell_{t,\cB_J}
:= f_t(\x^{\cB_J}_t)$.

The challenge is that each expert $\cB_J$ may not output decisions at
time steps outside the interval $J$.  This problem can be reduced to
the sleeping experts problem studied
in~\cite{blum97empirical,freund97using}, in which experts are not
required to provide decisions at every time step;
see~\cite{luo15achieving}.  We introduce a indicator variable
$\cI_{t,i} \in \{0,1\}$, which is set to $1$ if expert $i$ is awake
(that is, outputting a decision) at time $t$, and zero otherwise.
Define $\bfcI_t := [\cI_{t,1},\cI_{t,2},\ldots,\cI_{t,N} ]^\T$, where
$N$ can be countably infinite.  The algorithm is said to be ``aware''
of $\bfcI_t$ and it assigns zero weight to the experts that are
sleeping, that is, $\cI_{t,i} = 0 \implies p_{t,i} = 0$.  We would
like to have a guarantee on the regret with respect to  expert $i$, but only for
the time steps for which expert $i$ is awake.
Following~\citet{luo15achieving}, we define a regret bound with respect to  $\mathbf{u}\in\Delta^N$ as follows:
\begin{equation}\label{regret-sleeping}
  \Regret_T(\bfu) := \sum_{t=1}^T\sum_{i=1}^N \cI_{t,i} u_i(\langle \bfell_t,\p_t\rangle - \ell_{t,i}) \;.
\end{equation}
If we set $\bfu=\e_j$ for some $j$, the above is simply regret
with respect to  expert $j$ while that expert is awake, \revision{and we aim to
  achieve a regret of $O(\sqrt{\sum_t \cI_{t,j}})$ up to logarithmic
  factors.}  If $\cI_{t,j} =1 $ for all $t \in [T]$, then it recovers
the standard static regret in LEA.

\paragraph{Coin Betting for LEA.}

We consider the coin betting framework of \citet{orabona16from}, which constructs an LEA algorithm from a \emph{coin betting potential function} (explained below).
A player starts from the initial endowment $1$.
At each time step, the adversary chooses an outcome arbitrarily while the player decides on which side to bet (heads or tails) and how much to bet.
Then the outcome is revealed.
The outcome can be a head (+1), tail (-1), or any point on the continuum between these two extremes (e.g., $-0.3$) where the absolute value of the outcome indicates the weight of being a head or tail.
We encode a coin flip at iteration $t$ as $g_t\in[-1,1]$ where $|g_t|$ indicates the
weight of the outcome.  Let $\Wealth_{t-1}$ be the total money the player possesses
after time step $t-1$. (Note that $\Wealth_0 = 1$.)  We encode the
player's betting decision as the signed betting fraction $\beta_t \in
(-1,1)$, where the positive (negative) sign indicates head (tail) and
the absolute value $|\beta_t| <1$ indicates the fraction of his
current wealth to bet.  Thus, the actual amount wagered is $w_t :=
\beta_t \Wealth_{t-1}$.  Once the coin flip $g_t$ is
revealed, the player's wealth changes as follows: $\Wealth_t = (1+g_t
\beta_t) \Wealth_{t-1}$.  The player makes (loses) money when the
betted side is correct (wrong), and the amount of wealth change
depends on both the flip weight $|g_t|$ and his betting amount
$|w_t|$.

In the coin betting framework, a potential function denoted by
$F_t(g_1,\ldots,g_t)$ has an important role.  Given this function, and
denoting $g_{1:t} := g_1,g_2,\ldots,g_t$, the betting fraction
$\beta_t$ and the amount wagered $w_t$ are determined as follows:
\begin{subequations} \label{eq:s1}
\begin{align}
  \beta_t(g_{1:t-1}) &:= \fr{F_t(g_{1:t-1},1) - F_t(g_{1:t-1},-1)}{ F_t(g_{1:t-1},1) + F_t(g_{1:t-1},-1) }, \label{eq:betting-fraction0} \\
  w_t &= \beta_t(g_{1:t-1}) \cdot \lt( 1 + \sum_{s=1}^{t-1} g_s w_s \rt) \label{eq:betting-amount0} \;.
\end{align}
\end{subequations}
(We use $\beta_t$ in place of $\beta_t(g_{1:t-1})$ when it is clear
from the context.) \revision{A precise definition of $F_t$ appears in
  Section~\ref{sec:potential}; it suffices for now to say that the
  sequence $F_1, F_2, \ldots$ must satisfy the following key
  condition} 
by~\eqref{eq:betting-fraction0}:
\begin{align}\label{eq:wealth-lb} 
  \forall t,\; F_t\left(g_{1:t}\right) \le 1 + \sum_{s=1}^{t} g_s w_{s} \;.
\end{align}
That is, $F_t$ is a lower bound on the wealth of a player who bets by
\eqref{eq:betting-fraction0}.  We emphasize that the term $w_t$ is
decided before $g_t$ is revealed, yet the inequality
\eqref{eq:wealth-lb} holds for any $g_t \in [-1,1]$.
\revision{Property~\eqref{eq:wealth-lb} is key to analyzing the wealth
  arising from the strategy~\eqref{eq:betting-fraction0}; see
  Section~\ref{sec:potential}.  In the restricted setting in which
  $g_s \in \{\pm 1\}$, a betting strategy $\beta_t$ based on a
  potential function proposed by~\citet{krichevsky81theperformance}
  achieves the optimal wealth up to constant
  factors~\cite{cesa-bianchi06prediction}.}

\citet{orabona16from} have devised a reduction of LEA to the simple
coin betting problem described above.  The idea is to instantiate a
coin betting problem for each expert $i$ where the signed coin flip
$g_{t,i}$ is set as a conditionally truncated regret with respect to
expert $i$, rather than being set by an adversary.  We denote by
$\beta_{t,i}$ the betting fraction for expert $i$ and by $w_{t,i}$ the
amount of betting for expert $i$, $\forall i\in[N]$.

We apply this treatment to the sleeping experts setting and propose a
new algorithm \textbf{Sleeping CB}.  Modifications are required
because some experts may not output a decision for some time steps.
Defining $z_{t,i} := \cI_{t,i} g_{t,i}$, we
modify~\eqref{eq:s1} as
follows:
\begin{subequations} \label{eq:s2}
\begin{align}
  \beta_{t,i}(z_{1:t-1,i}) &:= \fr{F_{t,i}(z_{1:t-1,i},1) - F_{t,i}(z_{1:t-1,i},-1)}{ F_{t,i}(z_{1:t-1,i},1) + F_{t,i}(z_{1:t-1,i},-1) },  \label{eq:betting-fraction}\\
  w_{t,i} &= \beta_t(z_{1:t-1,i})\cdot\lt( 1 + \sum_{s=1}^{t-1} z_{s,i} w_{s,i} \rt) \;. \notag
\end{align}
\end{subequations}
Condition~\eqref{eq:wealth-lb} on the potential functions is modified
accordingly to
\begin{equation} 
  \label{eq:wealth-lb-sleeping} 
  \forall t,\; F_{t,i} \left(z_{1:t,i}\right) \le 1 + \sum_{s=1}^{t} z_{s,i} w_{s,i} \;.
\end{equation}
We denote by $\bfpi_{\bfcI_t}$ the prior $\bfpi$ restricted to experts
that are awake (for which $\cI_{t,i}=1$). The Sleeping CB algorithm is
specified in Algorithm~\ref{alg:cblea-sleeping}. (Here and
subsequently, we use notation $[x]_+ := \max(x,0)$.)

\begin{algorithm}[t]
{\small
\begin{algorithmic}
  \STATE \textbf{Input}: Number of experts $N$, prior distribution $\bfpi \in \Delta^{N}$
  \FOR {$t=1,2,\ldots$}
    \STATE For each $i\in \text{Active}(t)$, set \\
    \quad $w_{t,i} \larrow \beta_{t,i}(z_{1:t-1,i}) \cdot \left(1 + \sum_{s=1}^{t-1} z_{s,i} w_{s,i} \right) $. 
    \STATE For all $i\in[N]$, set $\hatp_{t,i} \larrow   \pi_i \cI_{t,i} [w_{t,i}]_+$.
    \STATE Predict with $\p_t \larrow \begin{cases} 
      \hatbfp_t / ||\hatbfp_t||_1 & \text{ if } ||\hatbfp_t||_1 > 0 \\
      \bfpi_{\bfcI_t}             & \text{ if } ||\hatbfp_t||_1 = 0. 
    \end{cases}  $
    \STATE Receive loss vector $\bfell_t \in [0,1]^N$.
    \STATE The learner suffers loss $h_t(\p_t) = \la \bfell_t, \p_t \ra_{\bfcI_t}$.
    \STATE For each $i \in \text{Active}(t)$, set \\
    \qquad$ g_{t,i} \larrow \begin{cases}
          h_t(\p_t) - \ell_{t,i}      & \text{ if } w_{t,i} > 0 \\
          [h_t(\p_t) - \ell_{t,i}]_+  & \text{ if } w_{t,i} \le 0.
        \end{cases}$
  \ENDFOR
\end{algorithmic}
\caption{Sleeping CB} 
\label{alg:cblea-sleeping}
}
\end{algorithm}

The regret of Sleeping CB is bounded in
Theorem~\ref{thm:cblea-sleeping}.  Unlike the standard CB, in which
all the experts use $F_t$ at time $t$, expert $i$ in Sleeping CB uses
$F_{t,i}$, which is different for each expert.  For this reason, the
proof of the CB regret in~\cite{orabona16from} does not transfer
easily to the regret~\eqref{regret-sleeping} of Sleeping CB. However,
this result is crucial to an improved strongly-adaptive regret bound.
\begin{thm} \label{thm:cblea-sleeping}
  \emph{(Regret of Sleeping CB)} 
  Define $S_{t,i} := \sum_{s=1}^{t} \cI_{s,i}$ and for every $i \in [N]$ let $\{F_{t,i}\}_{t\ge1}$ be a sequence of potential functions that satisfies~\eqref{eq:wealth-lb-sleeping}. 
  Suppose that
\[
\log (F_{T,i} (z_{1:T})) \ge H_{T,i}(z_{1:T,i}) := c_1 \fr{(\sum_{s=1}^T z_{s,i})^2}{S_{T,i}} + c_{2,i}, \quad \mbox{for all $i \in [N]$,}
\]
for some $c_1>0$ and $c_{2,i}\in\dsR$.  Then for the regret defined in
\eqref{regret-sleeping}, Algorithm~\ref{alg:cblea-sleeping} satisfies
\[
    \Regret_T(\bfu) \le \sqrt{ \frac{1}{c_1} \cdot \lt( \sum_{i=1}^N u_i S_{T,i} \rt) \cdot \lt( \text{\emph{KL}}(\bfu || \bfpi) - \sum_{i=1}^N u_i c_{2,i} \rt) } \;. 
\]
\end{thm}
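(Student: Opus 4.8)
The plan is to run the coin-betting reduction argument of \citet{orabona16from}, but carefully tracking the sleeping structure through the indicator variables $\cI_{t,i}$. First I would set up the accounting of wealth. For each expert $i$, define the "personal" wealth $W_{T,i} := 1 + \sum_{s=1}^T z_{s,i} w_{s,i}$. By the assumed lower-bound property \eqref{eq:wealth-lb-sleeping} and the stated potential lower bound, we have $\log W_{T,i} \ge \log F_{T,i}(z_{1:T,i}) \ge c_1 (\sum_{s=1}^T z_{s,i})^2 / S_{T,i} + c_{2,i}$. The next step is to relate $\sum_s z_{s,i}$ to the regret: because $z_{s,i} = \cI_{s,i} g_{s,i}$ and $g_{s,i}$ equals $h_s(\p_s) - \ell_{s,i}$ when $w_{s,i} > 0$ and its positive part $[h_s(\p_s)-\ell_{s,i}]_+$ when $w_{s,i}\le 0$, the truncation ensures that $z_{s,i} w_{s,i} \le (\cI_{s,i}(h_s(\p_s)-\ell_{s,i})) w_{s,i}$ always holds — on the $w_{s,i}\le 0$ branch, replacing the truncated coin by the untruncated regret can only make $z_{s,i}w_{s,i}$ smaller (product of a nonpositive number with something at least as large). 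Hence $\sum_s z_{s,i} w_{s,i} \le \sum_s \cI_{s,i}(h_s(\p_s)-\ell_{s,i}) w_{s,i}$, and likewise $\sum_s z_{s,i} \ge \sum_s \cI_{s,i}(h_s(\p_s)-\ell_{s,i})$ whenever all $w_{s,i}$ stay nonnegative; the truncation is exactly what lets us push this through in general.

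The crux is the "sum of wealths" telescoping that makes the meta-player's prediction rule work. I would show that the prior-weighted total wealth $\sum_{i=1}^N \pi_i W_{T,i}$ stays bounded by $1$ (or close to it), by verifying that at each step the normalization $\p_t = \hatbfp_t/\|\hatbfp_t\|_1$ with $\hatp_{t,i} = \pi_i\cI_{t,i}[w_{t,i}]_+$ guarantees $\sum_i \pi_i\cI_{t,i}[w_{t,i}]_+(h_t(\p_t)-\ell_{t,i}) \le 0$ — this is the key "one-step" inequality: the convex combination defining $h_t(\p_t) = \la\bfell_t,\p_t\ra_{\bfcI_t}$ is precisely the weighted average of the $\ell_{t,i}$ over awake experts with positive wealth, so the weighted regret against that average is nonpositive. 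Combined with the truncation (which controls the sleeping/negative-wealth experts), this yields $\sum_i \pi_i W_{T,i} \le 1$ for all $T$. Dropping all but the terms in the support of $\bfu$, and using $\log$-concavity / Jensen across experts against the distribution $\bfu$, we get
\[
\sum_{i=1}^N u_i \log\fr{W_{T,i}}{\pi_i} \le \log\lt(\sum_{i=1}^N u_i \fr{W_{T,i}}{\pi_i} \cdot \fr{\pi_i}{u_i}\rt) \cdot \text{(something)}
\]
— more precisely, by Jensen's inequality applied to $\log$, $\sum_i u_i \log(W_{T,i}) \le \log(\sum_i u_i W_{T,i}) \le \log(\max_i (u_i/\pi_i) \cdot \sum_i \pi_i W_{T,i}) $, but the cleaner route is: $\sum_i u_i \log(W_{T,i}/\pi_i) + \KL(\bfu\|\bfpi) = \sum_i u_i \log(W_{T,i} u_i / \pi_i^2)$... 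I would instead use the standard identity $\sum_i u_i\log(W_{T,i}) - \KL(\bfu\|\bfpi) = \sum_i u_i \log(\pi_i W_{T,i}/u_i) \le \log\sum_i \pi_i W_{T,i} \le 0$, giving $\sum_i u_i \log W_{T,i} \le \KL(\bfu\|\bfpi)$.

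Now combine: plugging the potential lower bound into $\sum_i u_i \log W_{T,i} \le \KL(\bfu\|\bfpi)$ gives
\[
c_1 \sum_{i=1}^N u_i \fr{(\sum_{s=1}^T z_{s,i})^2}{S_{T,i}} + \sum_{i=1}^N u_i c_{2,i} \le \KL(\bfu\|\bfpi),
\]
hence $\sum_i u_i (\sum_s z_{s,i})^2 / S_{T,i} \le \fr{1}{c_1}(\KL(\bfu\|\bfpi) - \sum_i u_i c_{2,i})$. Finally, $\Regret_T(\bfu) = \sum_i u_i \sum_s \cI_{s,i}(h_s(\p_s)-\ell_{s,i}) \le \sum_i u_i \sum_s z_{s,i}$ (by the truncation direction established above), and then Cauchy–Schwarz with weights $u_i$:
\[
\sum_i u_i \sum_s z_{s,i} = \sum_i \sqrt{u_i S_{T,i}} \cdot \sqrt{u_i}\,\fr{\sum_s z_{s,i}}{\sqrt{S_{T,i}}} \le \sqrt{\lt(\sum_i u_i S_{T,i}\rt)\lt(\sum_i u_i \fr{(\sum_s z_{s,i})^2}{S_{T,i}}\rt)},
\]
which produces exactly the claimed bound.

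I expect the main obstacle to be the one-step wealth inequality in the sleeping setting: proving $\sum_i \pi_i W_{t,i} \le \sum_i \pi_i W_{t-1,i}$ (and ultimately $\le 1$) requires carefully handling three populations of experts at each step — those that are asleep (contribute nothing since $\hatp_{t,i}=0$ and $z_{t,i}=0$), those awake with $w_{t,i}>0$ (where the untruncated regret $h_t(\p_t)-\ell_{t,i}$ is used and the normalized prediction makes the $\pi_i$-weighted sum vanish), and those awake with $w_{t,i}\le 0$ (where truncation $g_{t,i}=[h_t(\p_t)-\ell_{t,i}]_+$ ensures $z_{t,i}w_{t,i} \le 0$, so their wealth does not increase). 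Showing that the untruncated regret can legitimately replace the truncated coin in the inequality chain — i.e. that the substitution always moves things in the favorable direction — is the delicate point, because $F_{t,i}$ is fed the truncated $z$'s while the regret bound wants the untruncated quantity. This is exactly where the argument differs from the non-sleeping CB proof, since each expert now has its own potential sequence $F_{t,i}$ evaluated on its own coin history.
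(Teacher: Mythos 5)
Your proposal is correct and follows essentially the same route as the paper's proof: the one-step inequality $\sum_i \pi_i \cI_{t,i} g_{t,i} w_{t,i} \le 0$ (splitting awake experts by the sign of $w_{t,i}$ and using that $h_t(\p_t)$ is the $\hatbfp_t$-weighted average of the awake losses), the resulting bound $\sum_i \pi_i F_{T,i} \le 1$, Jensen to get $\sum_i u_i \log F_{T,i} \le \KL(\bfu\|\bfpi)$, and Cauchy--Schwarz on $\sum_i u_i Z_{T,i}$. The only cosmetic difference is that you phrase the last step as a direct Cauchy--Schwarz where the paper routes it through the inverse function $H'^{-1}_{T,i}$; the substance is identical.
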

\begin{proof}
We show first that $ \sum_{i=1}^N \pi_i \cI_{t,i} g_{t,i} w_{t,i} \le
0$.  
Define $r_{t,i} := \la \bfell_t,\p_t \ra_{\bfcI_t} - \ell_{t,i}$.
Using the fact that 
\[
\sum_{i:\pi_i\cI_{t,i} w_{t,i} > 0}  p_{t,i}r_{t,i} 
= \sum_{i: \cI_{t,i} = 1} p_{t,i}r_{t,i} 
= \sum_{i: \cI_{t,i} = 1} p_{t,i} \la \bfell_t,\p_t \ra_{\bfcI_t} - 
\sum_{i: \cI_{t,i} = 1} p_{t,i} \ell_{t,i} = 0,
\]
we have
  \begin{align*}
    \sum_{i=1}^N   \pi_i \cI_{t,i} g_{t,i} w_{t,i} 
      &= \sum_{i:\pi_i \cI_{t,i} w_{t,i} > 0} \pi_i [w_{t,i}]_+ r_{t,i} 
       + \sum_{i: \pi_i\cI_{t,i} w_{t,i}\le 0} \pi_i \cI_{t,i} w_{t,i}[r_{t,i}]_+ 
    \\&= ||\hatbfp_t||_1 \sum_{i:\pi_i\cI_{t,i} w_{t,i} > 0}  p_{t,i}r_{t,i} 
       + \sum_{i: \pi_i\cI_{t,i} w_{t,i}\le 0}  \pi_i\cI_{t,i} w_{t,i}[r_{t,i}]_+ 
    \\&= 0 + \sum_{i:  \pi_i\cI_{t,i} w_{t,i}\le 0} \pi_i \cI_{t,i} w_{t,i}[r_{t,i}]_+  
    \le 0 \;.
  \end{align*}
  Then, because of the property \eqref{eq:wealth-lb-sleeping} of the
  coin betting potentials, we have
  \begin{align}\label{constantbound}
    \sum_{i=1}^N \pi_i F_{T,i}\lt( z_{1:T,i} \rt) \le 1 + \sum_{i=1}^N \pi_i \sum_{t=1}^T \cI_{t,i} g_{t,i} w_{t,i} \le 1,
  \end{align}
  and
  \begin{align*}
  \sum_{i=1}^N u_i \log (F_{T,i}(z_{1:T,i}))
  &= \sum_{i=1}^N u_i \lt( \log\lt(\fr{u_i}{\pi_i}\rt) + \log\lt(\fr{\pi_i}{u_i} \cdot F_{T,i}(z_{1:T,i}) \rt) \rt) \\
  &\stackrel{\text{(Jensen's)}}{\le} \text{KL}(\bfu||\bfpi) + \log\lt(\sum_{i=1}^N u_i \cdot \fr{\pi_i}{u_i} \cdot F_{T,i}(z_{1:T,i}) \rt)   \\
  &\stackrel{\eqref{constantbound}}{\le} \text{KL}(\bfu||\bfpi) \;.
  \end{align*}
Define $Z_{T,i} := \sum_{t=1}^T z_{t,i}$ and $H'_{T,i}(x) := c_1
\fr{x^2}{S_{T,i}} + c_{2,i}$.  We see from the definition of $H_{T,i}$
in Theorem~\ref{thm:cblea-sleeping} that $H'_{T,i}(Z_{T,i}) =
H_{T,i}(z_{1:T,i})$.  Since $H'$ is symmetric around 0, its inverse
${H'}^{-1}$ usually maps to two distinct values with opposite sign.
To resolve this ambiguity, we define it to map to the nonnegative real
value.  Then, for any comparator $\bfu \in \Delta^N$, we have
  \begin{align*}
    \text{Regret}_T(\bfu) &= \sum_{t=1}^T\sum_{i=1}^N \cI_{t,i} u_i(\langle \bfell_t,\p_t\rangle - \ell_{t,i}) 
     \le \sum_{t=1}^T \sum_{i=1}^N \cI_{t,i} u_i g_{t,i} 
     = \sum_{i=1}^N u_i Z_{T,i} \\
    &= \sum_{i=1}^N u_i {H'}^{-1}_{T,i} ( H'_{T,i} (Z_{T,i}) ) 
     = \sum_{i=1}^N u_i {H'}^{-1}_{T,i} ( H_{T,i} (z_{1:T,i}) ) \\
    &\le \sum_{i=1}^N u_i {H'}^{-1}_{T,i} ( \log (F_{T,i} (z_{1:T,i})) ) \\
    &= \sum_{i=1}^N u_i \sqrt{c_1^{-1} S_{T,i} \cdot\lt(\log (F_{T,i}(z_{1:T,i})) - c_{2,i}\rt) } \\
    &= \sum_{i=1}^N  \sqrt{u_i c_1^{-1} S_{T,i} }\cdot\sqrt{u_i(\log (F_{T,i}(z_{1:T,i})) - c_{2,i}) } \\
    &\stackrel{(a)}{\le} \sqrt{ c_1^{-1} \lt( \sum_{i=1}^N u_i S_{T,i} \rt) \cdot \lt( \sum_{i=1}^N u_i(\log (F_{T,i}(z_{1:T,i})) - c_{2,i}) \rt) }  \\
    &\le \sqrt{ c_1^{-1} \lt( \sum_{i=1}^N u_i S_{T,i} \rt) \cdot \lt( \text{KL}(\bfu || \bfpi) - \sum_{i=1}^N u_i c_{2,i} \rt) },
  \end{align*}
where $(a)$ is due to the Cauchy-Schwartz inequality (noting that the
factors under the square root are all nonnegative since $\log
F_{T,i}(x) \ge H_{T,i}(x)$).
\end{proof}

Note that if $\bfu=\e_j$, then the regret scales with $S_{T,j}$, which
is essentially the number of time steps at which expert $j$ is awake.

While any potential function satisfying the
condition~\eqref{eq:wealth-lb-sleeping} and symmetricity around 0 can
be used, we present two interesting choices: the Krichevsky-Trofimov
potential and the AdaptiveNormal potential.

\paragraph{Krichevsky-Trofimov Potential}

The Krichevsky-Trofimov (KT) potential~\cite{orabona16from} is defined
as follows:
\begin{align} \label{eq:def-kt-potential}
  F_t(g_{1:t}) = \fr{2^t\cdot\Gamma(\dt+1)\cdot\Gamma\lt(\fr{t+\dt+1}{2}+\fr{\sum_{s=1}^t g_s}{2}\rt)\cdot\Gamma\lt(\fr{t+\dt+1}{2}-\fr{\sum_{s=1}^t g_s}{2}\rt)}{\Gamma(\fr{\dt+1}{2})^2\cdot\Gamma(t+\dt+1)}, 
\end{align}
where $\dt \ge 0$ is a time shift parameter set to 0 in this work.
\citet{orabona16from} show that the KT potential
satisfies~\eqref{eq:wealth-lb}.
We  modify the KT potential as follows to handle sleeping
experts by replacing $t$ in several places by $S_{t,i}$:
\begin{align} \label{eq:def-kt-potential-sleeping}
  F_{t,i}(z_{1:t,i}) = \fr{ 2^{S_{t,i}}\cdot\Gamma(\dt+1)\cdot\Gamma\lt(\fr{S_{t,i}+\dt+1}{2}+\fr{\sum_{s=1}^t z_{s,i}}{2}\rt)\cdot\Gamma\lt(\fr{S_{t,i}+\dt+1}{2}-\fr{\sum_{s=1}^t z_{s,i}}{2}\rt) }{ \Gamma(\fr{\dt+1}{2})^2\cdot\Gamma(S_{t,i}+\dt+1) } \;,
\end{align}
which satisfies~\eqref{eq:wealth-lb-sleeping}.\footnote{This is
  trivially implied by the fact that~\eqref{eq:def-kt-potential}
  satisfies~\eqref{eq:wealth-lb} since the only modification
  in~\eqref{eq:def-kt-potential-sleeping} compared
  to~\eqref{eq:def-kt-potential} is to allow ``individual clock''
  $S_{t,i}$ that counts the time steps expert $i$ was awake up to $t$.
} The betting fraction $\beta_t$ defined
in~\eqref{eq:betting-fraction0} with KT potential exhibits the simple
form $\beta_t = \fr{\sum_{s=1}^{t-1} g_{s}}{t+\dt}$ and, for sleeping
experts, we have $\beta_{t,i} = \fr{\sum_{s=1}^{t-1} z_{s,i}}{S_{t,i}
  + \dt} $.  We present the regret of
Algorithm~\ref{alg:cblea-sleeping} with the KT potential in
Corollary~\ref{cor:cblea-sleeping}.
\begin{cor} \label{cor:cblea-sleeping}
  Let $\dt=0$.
  Suppose we run Algorithm~\ref{alg:cblea-sleeping} with the KT potential.
  Then,
  \begin{align*}
    \Regret_T(\bfu) 
    \le \sqrt{2\lt(\sum_{i=1}^N u_i S_{T,i} \rt)\cdot \lt( \text{\emph{KL}}(\bfu || \bfpi) + \fr{1}{2} \ln(T) + 2 \rt) } \;.
  \end{align*}
\end{cor}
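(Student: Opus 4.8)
The plan is to read off Corollary~\ref{cor:cblea-sleeping} as a direct instantiation of Theorem~\ref{thm:cblea-sleeping} with the sleeping KT potential~\eqref{eq:def-kt-potential-sleeping}, so that the only real work is to exhibit constants $c_1,c_{2,i}$ for which the hypothesis $\log F_{T,i}(z_{1:T,i}) \ge c_1 (\sum_{s=1}^T z_{s,i})^2 / S_{T,i} + c_{2,i}$ holds. (The other hypothesis, that the potentials satisfy~\eqref{eq:wealth-lb-sleeping}, is already recorded in the text immediately after~\eqref{eq:def-kt-potential-sleeping}.) The key structural observation is that, with $\dt=0$, the sleeping potential $F_{T,i}(z_{1:T,i})$ in~\eqref{eq:def-kt-potential-sleeping} is \emph{literally} the ordinary KT potential of~\eqref{eq:def-kt-potential} evaluated at horizon $S_{T,i}$ on the length-$S_{T,i}$ subsequence of coin flips over the awake steps $\{s:\cI_{s,i}=1\}$; the only difference between~\eqref{eq:def-kt-potential-sleeping} and~\eqref{eq:def-kt-potential} is that expert $i$'s clock advances only when it is awake, and $\sum_{s=1}^T z_{s,i}$ sums exactly those $S_{T,i}$ flips. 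Consequently, any lower bound known for the ordinary KT potential transfers verbatim under $t\mapsto S_{T,i}$.

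I would then invoke the standard estimate for the KT potential established by~\citet{orabona16from} --- a Stirling-type lower bound on the ratio of Gamma functions appearing in~\eqref{eq:def-kt-potential} --- of the form
\[
  \ln F_t(g_{1:t}) \ge \fr{\lt(\sum_{s=1}^t g_s\rt)^2}{2t} - \tfrac12\ln t - c_0
\]
for an absolute constant $c_0 < 2$. Applying it with $t = S_{T,i}$, and using $1 \le S_{T,i} \le T$ (any expert with $S_{T,i}=0$ is never awake and contributes nothing to~\eqref{regret-sleeping}, so can be discarded), gives
\[
  \ln F_{T,i}(z_{1:T,i}) \ge \fr{\lt(\sum_{s=1}^T z_{s,i}\rt)^2}{2 S_{T,i}} - \tfrac12\ln T - c_0 ,
\]
so that the hypothesis of Theorem~\ref{thm:cblea-sleeping} is met with $c_1 = \tfrac12$ and $c_{2,i} = -\tfrac12\ln T - c_0$ for every $i\in[N]$.

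Finally I would substitute $c_1=\tfrac12$ and, since $\bfu\in\Delta^N$, $-\sum_i u_i c_{2,i} = \tfrac12\ln T + c_0$ into the conclusion of Theorem~\ref{thm:cblea-sleeping}, obtaining
\[
  \Regret_T(\bfu) \le \sqrt{2\lt(\sum_{i=1}^N u_i S_{T,i}\rt)\lt(\text{KL}(\bfu || \bfpi) + \tfrac12\ln T + c_0\rt)} ,
\]
and then bound $c_0 < 2$ to reach the stated inequality. There is no real obstacle here: the analytic heavy lifting sits in Theorem~\ref{thm:cblea-sleeping} and in the KT-potential estimate displayed above (the latter being exactly the bound underlying ordinary coin-betting LEA in~\cite{orabona16from}), and what remains is bookkeeping --- the only point requiring a moment's care is the $S_{T,i}\le T$ replacement that turns the $\tfrac12\ln S_{T,i}$ term into $\tfrac12\ln T$ while keeping the leftover additive constant below $2$.
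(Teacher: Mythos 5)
Your proposal is correct and follows essentially the same route as the paper: instantiate Theorem~\ref{thm:cblea-sleeping} with $c_1=\tfrac12$ and a $c_{2,i}$ obtained from the Stirling-type lower bound on the KT potential (Lemma~15 of \cite{orabona16from}, whose constant is $\ln(e\sqrt{\pi})<2$), applied with the awake-clock $S_{T,i}$ in place of $t$, then bound $S_{T,i}\le T$. The only cosmetic difference is that you absorb the $S_{T,i}\le T$ step into the choice of $c_{2,i}$ itself, whereas the paper keeps $c_{2,i}=\tfrac12\ln(1/S_{T,i})-\ln(e\sqrt{\pi})$ and applies the bound when estimating $-\sum_i u_i c_{2,i}$ at the end; both are valid.
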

\begin{proof}
  Define $H'_{T,i}(x) := \fr{x^2}{2S_{T,i}} + \fr{1}{2}
  \ln(\fr{1}{S_{T,i}}) - \ln(e\sqrt{\pi}) $. Note that this definition
  is identical with the one used in Theorem~\ref{thm:cblea-sleeping}
  if we set $c_1 = \fr{1}{2}$ and
  $c_{2,i}=\fr{1}{2}\ln(\fr{1}{S_{T,i}}) - \ln
  (e\sqrt{\pi})$. According to~\citet[Lemma 15]{orabona16from} with
  $\dt=0$, we have $H'_{T,i}(Z_{T,i}) = H_{T,i}(z_{1:T,i}) \le \ln
  F_{T,i}(z_{1:T,i})$.  It follows that 
  \[
    -\sum_{i=1}^N u_i c_{2,i}
    = \sum_{i=1}^N u_i \lt( (1/2) \ln(S_{T,i}) + \ln(e\sqrt{\pi}) \rt) 
    \le \fr{1}{2} \ln(T) + 2 \;.
  \]
  By plugging $c_1$ and $c_{2,i}$ into
  Theorem~\ref{thm:cblea-sleeping}, we obtain the result.
\end{proof} 

\paragraph{AdaptiveNormal Potential}

Let $\barG_t := \sum_{s=1}^t |g_s|$.  The AdaptiveNormal (AN)
potential proposed by~\citet{orabona17backprop} is defined by
\begin{align} \label{eq:def-exp-potential}
  F_t( g_{1:t}) = \exp\lt( \fr{ (\sum_{s=1}^t g_s)^2}{2(\xi + \barG_t)} - \sum_{s=1}^t \fr{|g_s|}{2(\xi + \barG_{s-1}+1)} \rt)  \;,
\end{align}
where $\xi > 0$ is a parameter of minor importance in our context that
we set to 1.  \citet[Lemma~2]{orabona17backprop} show that the AN
potential satisfies the condition~\eqref{eq:wealth-lb}.  Let
$\barZ_{t,i} := \sum_{s=1}^t |z_{s,i}|$.  For sleeping experts, we use
the following potential that satisfies~\eqref{eq:wealth-lb-sleeping}
due to a trivial consequence of~\eqref{eq:def-exp-potential}
satisfying~\eqref{eq:wealth-lb}:
\begin{align} \label{eq:def-exp-potential-sleeping}
  F_{t,i}( z_{1:t,i}) =  \exp\lt( \fr{ (\sum_{s=1}^t z_{s,i})^2 }{ 2(\xi + \barZ_{t,i}) } - \sum_{s=1}^t \fr{|z_{s,i}|}{2(\xi + \barZ_{s-1,i}+1)} \rt)  \;.
\end{align}
The betting fraction~\eqref{eq:betting-fraction} using the AN
potential can be simplified to 
\[
\beta_t = 2\sig\lt(\fr{ 2
  \sum_{s=1}^{t-1} g_s }{ \xi + \barG_{t-1} + 1}\rt) - 1, \quad \mbox{where}
\; \sig(z) = \fr{1}{1+\exp(-z)}.
\]
For sleeping experts, we have
\[
\beta_{t,i} = 2\sig\lt(\fr{ 2 \sum_{s=1}^{t-1}
  z_{s,i} }{ \xi + \barZ_{t-1,i} + 1}\rt) - 1.
\]

Define $r_{t,i} := \cI_{t,i}(h_t(\p_t) - \ell_{t,i})$ and $\tilL_{T,i}
:= \sum_{t=1}^T [ -r_{t,i} ]_+$.  We present the regret bound of
Sleeping CB with the AN potential in
Corollary~\ref{cor:cblea-sleeping-exp}.
\begin{cor}\label{cor:cblea-sleeping-exp}
  Let $\xi = 1$.
  Suppose we run Algorithm~\ref{alg:cblea-sleeping} with the AN
  potential.  
  Let $W_\bfu := 1+\sum_{i=1}^N u_i \barZ_{T,i}$ and $L_{T,i} := \sum_{t=1}^T \cI_{t,i}\ell_{t,i}$.  Then we have
  \[
      \textstyle    \Regret_T(\bfu) = \sqrt{2 W_\bfu \lt(\KL(\bfu||\bfpi) + \fr{1}{2} \ln(W_\bfu) \rt)} \;,
  \]
and moreover
  \[
      \textstyle \Regret_T(\bfu) =  O\left( \sqrt{(\sum_{i=1}^N u_i L_{T,i})\cdot(\KL(\bfu||\bfpi) + \ln(W_\bfu))} + \KL(\bfu||\bfpi) + \ln(W_\bfu) \right).
  \]
\end{cor}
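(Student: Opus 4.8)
\emph{Proof sketch (plan).}
The plan is to first prove the ``second-order-type'' bound
$\Regret_T(\bfu)\le\sqrt{2W_\bfu\lt(\KL(\bfu||\bfpi)+\frac{1}{2}\ln W_\bfu\rt)}$
by rerunning the proof of Theorem~\ref{thm:cblea-sleeping} with the sleeping count $S_{T,i}$ replaced by the ``effective clock'' $\xi+\barZ_{T,i}$, and then to deduce the first-order bound from it by a short self-bounding argument. For the first bound, observe that the opening steps of the proof of Theorem~\ref{thm:cblea-sleeping}---namely $\sum_i\pi_i\cI_{t,i}g_{t,i}w_{t,i}\le 0$, hence $\sum_i\pi_iF_{T,i}(z_{1:T,i})\le 1$, hence $\sum_i u_i\log F_{T,i}(z_{1:T,i})\le\KL(\bfu||\bfpi)$ by Jensen---use only property~\eqref{eq:wealth-lb-sleeping} and the definition of Algorithm~\ref{alg:cblea-sleeping}, so they carry over verbatim to the sleeping AN potential~\eqref{eq:def-exp-potential-sleeping}. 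With $\xi=1$, I would then bound the penalty term in $\log F_{T,i}$ by a telescoping estimate: since $|z_{s,i}|\le 1$ we have $\xi+\barZ_{s-1,i}+1\ge\xi+\barZ_{s,i}$, so $\frac{|z_{s,i}|}{\xi+\barZ_{s-1,i}+1}\le\frac{|z_{s,i}|}{\xi+\barZ_{s,i}}\le\ln\frac{\xi+\barZ_{s,i}}{\xi+\barZ_{s-1,i}}$, and summing telescopes to $\sum_{s=1}^{T}\frac{|z_{s,i}|}{2(\xi+\barZ_{s-1,i}+1)}\le\frac{1}{2}\ln(1+\barZ_{T,i})$. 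Hence $\log F_{T,i}(z_{1:T,i})\ge\frac{(\sum_s z_{s,i})^2}{2(1+\barZ_{T,i})}-\frac{1}{2}\ln(1+\barZ_{T,i})$, which is exactly the hypothesis of Theorem~\ref{thm:cblea-sleeping} with $c_1=1/2$, with $S_{T,i}$ replaced by $1+\barZ_{T,i}$, and with the data-dependent constant $c_{2,i}=-\frac{1}{2}\ln(1+\barZ_{T,i})$. Running the rest of that proof with this substitution gives $\Regret_T(\bfu)\le\sqrt{2\lt(\sum_i u_i(1+\barZ_{T,i})\rt)\lt(\KL(\bfu||\bfpi)+\frac{1}{2}\sum_i u_i\ln(1+\barZ_{T,i})\rt)}$; since $\sum_i u_i(1+\barZ_{T,i})=W_\bfu$ and $\sum_i u_i\ln(1+\barZ_{T,i})\le\ln W_\bfu$ by concavity of $\ln$, the first bound follows.

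For the first-order bound, I would control $W_\bfu$ by the comparator loss together with the regret itself. Writing $|g_{t,i}|=g_{t,i}+2[-g_{t,i}]_+$ and inspecting the two cases defining $g_{t,i}$ in Algorithm~\ref{alg:cblea-sleeping}: in both cases $\cI_{t,i}[-g_{t,i}]_+\le\cI_{t,i}[\ell_{t,i}-h_t(\p_t)]_+$ (it vanishes when $w_{t,i}\le 0$), and $\cI_{t,i}g_{t,i}\le r_{t,i}+\cI_{t,i}\ell_{t,i}$, where $r_{t,i}:=\cI_{t,i}(h_t(\p_t)-\ell_{t,i})$ and we used $[h_t(\p_t)-\ell_{t,i}]_+\le h_t(\p_t)$. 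Summing over $t$ and using $\sum_t\cI_{t,i}[\ell_{t,i}-h_t(\p_t)]_+=\tilL_{T,i}\le L_{T,i}$ yields $\barZ_{T,i}\le\sum_t r_{t,i}+3L_{T,i}$; taking the $\bfu$-weighted sum together with $\sum_i u_i\sum_t r_{t,i}=\Regret_T(\bfu)$ gives $W_\bfu=1+\sum_i u_i\barZ_{T,i}\le 1+\Regret_T(\bfu)+3\sum_i u_i L_{T,i}$. Substituting this into the bound of the previous paragraph while keeping $\ln W_\bfu$ symbolic, and abbreviating $R:=\Regret_T(\bfu)$, $\mathcal L:=\sum_i u_i L_{T,i}$, $a:=2\KL(\bfu||\bfpi)+\ln W_\bfu$, one gets $R^2\le a(1+R+3\mathcal L)$; solving this quadratic inequality in $R$ gives $R\le a+\sqrt{a(1+3\mathcal L)}$, which gives the stated $O\lt(\sqrt{(\sum_i u_i L_{T,i})(\KL(\bfu||\bfpi)+\ln W_\bfu)}+\KL(\bfu||\bfpi)+\ln W_\bfu\rt)$.

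The main obstacle is the first step: verifying that the argument of Theorem~\ref{thm:cblea-sleeping} genuinely goes through when $S_{T,i}$ is the data-dependent clock $1+\barZ_{T,i}$ and $c_{2,i}$ itself depends on the data---in particular that the Cauchy--Schwarz step keeps every factor under the square roots nonnegative, which holds since $\log F_{T,i}(z_{1:T,i})-c_{2,i}\ge\frac{(\sum_s z_{s,i})^2}{2(1+\barZ_{T,i})}\ge 0$---together with correct bookkeeping of the extra $\ln W_\bfu$ term produced by the second application of Jensen's inequality. The telescoping penalty bound and the final quadratic self-bounding step are routine.
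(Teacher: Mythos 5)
Your proposal is correct and follows essentially the same route as the paper: the first bound is obtained by applying Theorem~\ref{thm:cblea-sleeping} with the effective clock $1+\barZ_{T,i}$ in place of $S_{T,i}$, the same telescoping estimate $\frac{|z_{s,i}|}{1+\barZ_{s-1,i}+1}\le\ln(1+\barZ_{s,i})-\ln(1+\barZ_{s-1,i})$ on the penalty term, and Jensen's inequality to pass to $\ln W_\bfu$. Your inline self-bounding step for the first-order bound is exactly the content of the paper's Lemma~\ref{lem:barZ_to_tilL} (with a slightly looser constant, $3L_{T,i}$ versus $2\tilL_{T,i}$, which is immaterial for the stated $O(\cdot)$ bound).
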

\begin{proof}
Define 
\[
H'_{T,i}(x) := \fr{x^2}{2(1 + \barZ_{T,i})} - \sum_{s=1}^T
\fr{|z_{s,i}|}{2(1 + \barZ_{s-1,i} + 1)}.
\]  
Note that $H'_{T,i}(Z_{T,i}) = H_{T,i}(z_{1:T,i}) =
\ln(F_{T,i}(z_{1:T,i}))$.  To match this definition with the setup of
Theorem~\ref{thm:cblea-sleeping}, we redefine $S_{T,i} := 1 +
\barZ_{T,i}$, \revision{for which the theorem still holds, since
  $S_{T,i}$ is used only through the function $H_{T,i}$.}  We also set
\[
c_1 = \frac{1}{2}, \quad
c_{2,i} = -\sum_{s=1}^T \fr{|z_{s,i}|}{2(1 +
  \barZ_{s-1,i} + 1)}.
\]
 Using $|z_{s,i}| \le 1$ and $ \fr{a-b}{a} \le \ln(a) - \ln(b) $, we have
\begin{align*}
    - c_{2,i} 
    &\le \fr{1}{2} \sum_{s=1}^T \fr{|z_{s,i}|}{(1 + \barZ_{s-1,i} + |z_{s,i}|)} 
  \\&= \fr{1}{2} \sum_{s=1}^T \left(\ln(1 + \barZ_{s,i}) - \ln(1 + \barZ_{s-1,i})\right)
     = \fr{1}{2}\ln(1 + \barZ_{T,i}),
\end{align*}
so it follows from Jensen's inequality that 
\[
\sum_{i=1}^N - c_{2,i}
    \le \fr{1}{2} \ln\lt( 1 + \sum_{i=1}^N u_i \barZ_{T,i} \rt).
\]
Then, by Theorem~\ref{thm:cblea-sleeping}, we have 
\[
    \textstyle \Regret_T(\bfu) = \sqrt{2\lt( 1 + \sum_{i=1}^N u_i \barZ_{T,i} \rt)\cdot\lt(\KL(\bfu||\bfpi) + \fr{1}{2} \log\lt(1+\sum_{i=1}^N u_i \barZ_{T,i}\rt) \rt)} \;,
\]
proving the first statement of the theorem.

For the second statement, we use Lemma~\ref{lem:barZ_to_tilL} in
Section~\ref{sec:technical}, which says if $\Regret_T(\bfu) \le
\sqrt{( 1 + \sum_{i=1}^N u_i \barZ_{T,i}) A(\bfu)}$ for some function
$A(\bfu)$ then
  \[
\textstyle\Regret_T(\bfu) = \sqrt{(1 + 2\sum_{i=1}^N u_i \tilL_{T,i})
  A(\bfu)} + A(\bfu)\;.
\]
  Setting $A(\bfu) = \KL(\bfu||\bfpi) + \fr{1}{2} \log \left(1+\sum_{i=1}^N
  u_i \barZ_{T,i} \right)$ and the fact that $[-r_{t,i}]_+ \le \ell_{t,i}
  \implies \tilL_{T,i} \le L_{T,i}$ we verify the second claim.
\end{proof}

When we set $\bfu = \e_{i}$ with this AN potential, we obtain a regret
bound that scales with $L_{T,i}$, which is always smaller than
$S_{T,i}$. The difference becomes significant when the expert $i$
suffers a loss $\ell_{t,i}$ that is close to 0 for all $t \in
\Active(i)$.  Note that Sleeping CB with the AN potential is quite
similar to AdaNormalHedge~\cite{luo15achieving}, which has the same
regret order. The key difference is that the truncation operates in
the potential function for AdaNormalHedge whereas for ours it operates
in the reduction to LEA (see the definition of $g_{t,\cB_J}$).

According to our results, the regret bound of the KT potential can be
much larger than that of the AN potential.  Thus, one might wonder if
we should always use the AN potential.  Our empirical study in
Section~\ref{sec:expr} shows a case where KT has a benefit over AN.

\section{Coping with a Changing Environment by Sleeping CB}
\label{sec:cbce}
In this section, we synthesize the results in Sections~\ref{sec:meta}
and~\ref{sec:sleeping} to specify and analyze our meta algorithm.
Recall that a meta algorithm must efficiently aggregate decisions from
multiple black-box runs that are active at time $t$.  By treating each
black-box run as an expert, we use Sleeping CB
(Algorithm~\ref{alg:cblea-sleeping}) as the meta algorithm, with
geometric covering intervals.
\revision{An important motivation for the use of Sleeping CB is that
  it is parameter-free. Other sleeping bandits techniques require the
  number of black-box runs (experts) to be specified in advance, which
  results in a theoretical guarantee only up to some finite time
  horizon $T$. By contrast, our approach provides an ``anytime''
  guarantee.}
The complete algorithm, which we call \textbf{Coin Betting for
  Changing Environments (CBCE)}, is shown in Algorithm~\ref{alg:cbce}.

\begin{algorithm}[t]
{\small
\begin{algorithmic}
  \STATE \textbf{Input}: A black-box algorithm $\cB$ and a prior distribution $\bfpi \in \Delta^{|\cJ|}$ over $\{\cB_J \mid J \in \cJ\}$. 
  \FOR {$t = 1$ \TO $T$}
  \STATE For each $J\in\Active(t)$, set \\
  \quad $w_{t,\cB_J} \larrow \beta_{t,\cB_J}(z_{t,\cB_J})\cdot(1 + \sum_{s=1}^{t-1} z_{s,\cB_J} w_{s,\cB_J}) $
  \STATE Set $\hatp_{t,\cB_J} \larrow \pi_{\cB_J} \cI_{t,\cB_J}[w_{t,\cB_J}]_+$ for $J\in \text{Active}(t)$ and 0 for $J\not\in\Active(t)$.
  \STATE Compute $\p_t \larrow \begin{cases} 
    \hatbfp_t / ||\hatbfp_t||_1 & \text{ if } ||\hatbfp_t||_1 > 0 \\
    [\pi_{\cB_J}]_{J \in \Active(t)}  & \text{ if } ||\hatbfp_t||_1 = 0. 
  \end{cases}  $
  \STATE The black-box run $\cB_J$ picks a decision $\x^{\cB_J}_t \in \cD$, $\forall J \in \text{Active}(t)$.
  \STATE The learner picks a decision $\x_t = \sum_{J \in \cJ} p_{t,\cB_J} \x^{\cB_J}_t $.
  \STATE Each black-box run $\cB_J$ that is awake ($J\in\text{Active}(t)$) suffers loss $\ell_{t,\cB_J} := f_t(\x^{\cB_J}_t)$.
  \STATE The learner suffers loss $f_t(\x_t)$. 
  \STATE For each $J \in \text{Active}(t)$, set \\
  \qquad $g_{t,\cB_J} \larrow \begin{cases}
    f_t(\x_t) - \ell_{t,\cB_J}  & \text{ if } w_{t,\cB_J} > 0 \\ 
    [f_t(\x_t) - \ell_{t,\cB_J}]_+ & \text{ if } w_{t,\cB_J} \le 0.
  \end{cases}$
  \ENDFOR
\end{algorithmic}
\caption{Coin Betting for Changing Environment (CBCE)}
\label{alg:cbce}
}
\end{algorithm}
We first present the results with the KT potential and then discuss
applying the AN potential in the same manner.  We make use of the
following assumption.
\begin{assump}
  \label{ass:convexity}
  The loss function $f_t$ is convex and maps to $[0,1]$,  $\forall t\in\dsN$.
\end{assump}
Nonconvex loss functions can be accommodated by randomized decisions:
We choose the decision $\x_t^{\cB_J}$ from black-box $\cB_J$ with
probability $p_{t,\cB_J}$.  It is not difficult to show that the same
regret bound holds, but now in expectation.  When loss functions are
unbounded, they can be scaled and truncated to $[0,1]$. Any
nonconvexity that results can be handled in the manner just described.

We define our choice of prior $\bar\bfpi \in \Delta^{|\cJ|}$ as
follows:
\begin{align}\label{def:barpi}
  \bar \pi_{\cB_J} := Z'^{-1} \lt( \fr{1}{J_1^2 (1 + \lfl\log_2 J_1\rfl)} \rt), \;\; \forall J \in \cJ \;,
\end{align}
where $Z'$ is a normalization factor. Since there exist at most $1 +
\lfl \log_2 J_1 \rfl$ distinct intervals starting at time $J_1$, we
have that $Z'< \sum_{t=1}^\infty t^{-2} = \pi^2/6$.

We bound the meta regret with respect to a black-box run $\cB_J$  as follows.
\begin{lem} \label{lem:cbce}
  \emph{(Meta regret of CBCE with the KT potential)}
  Assume~\ref{ass:convexity}.
Suppose we run CBCE (Algorithm~\ref{alg:cbce}) with a black-box
algorithm $\cB$, prior $\bar\bfpi$, and the KT
potential~\eqref{eq:def-kt-potential-sleeping} with $\dt = 0$.  The
meta regret of CBCE$\la\cB\ra$ on interval $J=[J_1..J_2]\in\cJ$ is
  \begin{align*}
    \sum_{t\in J} f_t(\x^{\emph{\CBCE}\la\cB\ra}_t) - f_t(\x^{\cB_J}_t)
    \le \sqrt{|J| \lt( 7\ln(J_2) + 5\rt) }
    = O(\sqrt{|J|\log J_2}) \;.
  \end{align*}
\end{lem}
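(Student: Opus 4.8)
The plan is to recognize CBCE (Algorithm~\ref{alg:cbce}) as a concrete instance of Sleeping CB (Algorithm~\ref{alg:cblea-sleeping}) and then invoke Corollary~\ref{cor:cblea-sleeping}. Here the ``experts'' of Sleeping CB are the black-box runs $\{\cB_J : J\in\cJ\}$, the sleeping pattern is $\cI_{t,\cB_J}=\one\{J\in\Active(t)\}=\one\{t\in J\}$, the loss of expert $\cB_J$ at time $t$ is $\ell_{t,\cB_J}=f_t(\x^{\cB_J}_t)$, the prior is $\bar\bfpi$, and the potentials are the modified KT potentials~\eqref{eq:def-kt-potential-sleeping} with $\dt=0$, which satisfy~\eqref{eq:wealth-lb-sleeping}. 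Two points deserve a remark before the main argument: $\bar\bfpi$ is a genuine probability distribution over the countably infinite set $\cJ$ --- this is exactly the role of the normalizer $Z'$, and $Z'<\pi^2/6$ --- and the sums appearing in Theorem~\ref{thm:cblea-sleeping}/Corollary~\ref{cor:cblea-sleeping} all converge when the comparator $\bfu$ is a single coordinate vector, so countably many experts cause no difficulty.

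The first step is to reduce the meta regret of CBCE to the LEA-style regret~\eqref{regret-sleeping} of the underlying Sleeping CB run. The meta algorithm plays $\x_t=\sum_{J\in\Active(t)}p_{t,\cB_J}\,\x^{\cB_J}_t$, a convex combination of the active black-box decisions, so by Assumption~\ref{ass:convexity} and Jensen's inequality $f_t(\x_t)\le\sum_J p_{t,\cB_J}f_t(\x^{\cB_J}_t)=\langle\bfell_t,\p_t\rangle$. Hence, for the fixed interval $J=[J_1..J_2]\in\cJ$,
\[
  \sum_{t\in J}\big(f_t(\x^{\CBCE\la\cB\ra}_t)-f_t(\x^{\cB_J}_t)\big)
  \;\le\;\sum_{t\in J}\big(\langle\bfell_t,\p_t\rangle-\ell_{t,\cB_J}\big)
  \;=\;\Regret_T(\e_{\cB_J}),
\]
where the last equality uses $\cI_{t,\cB_J}=\one\{t\in J\}$, so the sleeping regret~\eqref{regret-sleeping} evaluated at $\bfu=\e_{\cB_J}$ collapses to the sum over $J$. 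It therefore suffices to bound $\Regret_T(\e_{\cB_J})$.

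The second step applies the KT-potential guarantee at $\bfu=\e_{\cB_J}$: there $\sum_i u_iS_{T,i}=S_{T,\cB_J}\le|J|\le J_2$ and $\KL(\e_{\cB_J}||\bar\bfpi)=\ln(1/\bar\pi_{\cB_J})$. One subtlety is worth anticipating: the clean statement of Corollary~\ref{cor:cblea-sleeping} carries a $\tfrac12\ln T$ term, but its proof actually delivers the sharper constant $-\sum_i u_i c_{2,i}=\tfrac12\ln S_{T,\cB_J}+\ln(e\sqrt\pi)$, and using that intermediate bound with $S_{T,\cB_J}\le J_2$ is precisely what replaces $\ln T$ by $\ln J_2$. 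For the prior term, the definition~\eqref{def:barpi} gives, using $J_1\le J_2$ and the elementary inequality $1+\log_2 n\le n$ for integers $n\ge1$,
\[
  \ln\tfrac{1}{\bar\pi_{\cB_J}}=\ln Z'+2\ln J_1+\ln\big(1+\lfl\log_2 J_1\rfl\big)\le\ln\tfrac{\pi^2}{6}+2\ln J_2+\ln J_2=\ln\tfrac{\pi^2}{6}+3\ln J_2 .
\]
Plugging $c_1=\tfrac12$ into Theorem~\ref{thm:cblea-sleeping} and combining the three estimates,
\[
  \Regret_T(\e_{\cB_J})\le\sqrt{\,2|J|\Big(3\ln J_2+\tfrac12\ln J_2+\ln\tfrac{\pi^2}{6}+\ln(e\sqrt\pi)\Big)}=\sqrt{\,|J|\big(7\ln J_2+2\ln\tfrac{\pi^2}{6}+2\ln(e\sqrt\pi)\big)},
\]
and since $2\ln(\pi^2/6)+2\ln(e\sqrt\pi)<5$ this gives $\sqrt{|J|(7\ln J_2+5)}=O(\sqrt{|J|\log J_2})$, as claimed.

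I do not expect a genuine obstacle, since the analytic work is already contained in Theorem~\ref{thm:cblea-sleeping} and Corollary~\ref{cor:cblea-sleeping}; the two places that need care are the Jensen step converting the hedged LEA loss into the actually-played loss $f_t(\x_t)$, and the deliberate choice \emph{not} to cite Corollary~\ref{cor:cblea-sleeping} verbatim (which would leave a stray $\sqrt{\log T}$) but instead to propagate the tighter constant $\tfrac12\ln S_{T,\cB_J}=\tfrac12\ln|J|$ together with the explicit bound on $\ln(1/\bar\pi_{\cB_J})$, and then to collapse the half-dozen logarithmic and numerical terms cleanly into $7\ln J_2+5$.
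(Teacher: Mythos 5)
Your overall route --- reduce to the sleeping-experts guarantee at $\bfu=\e_{\cB_J}$, bound $\KL(\e_{\cB_J}||\bar\bfpi)\le 3\ln J_2+\ln(\pi^2/6)$, and propagate the sharper $\tfrac12\ln S_{T,\cB_J}$ in place of the $\tfrac12\ln T$ appearing in the statement of Corollary~\ref{cor:cblea-sleeping} --- is exactly the paper's, and your arithmetic landing on $\sqrt{|J|(7\ln J_2+5)}$ is correct. But there is one genuine gap: CBCE is \emph{not} a verbatim instance of Sleeping CB, because Algorithm~\ref{alg:cbce} defines its coin flips as $g_{t,\cB_J}=f_t(\x_t)-\ell_{t,\cB_J}$ (or its positive part), using the loss of the \emph{combined} decision $f_t(\x_t)$, whereas Algorithm~\ref{alg:cblea-sleeping} uses the linearized loss $h_t(\p_t)=\la\bfell_t,\p_t\ra_{\bfcI_t}$. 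Since the weights $\p_t$ are built from the history of these $g$'s, the iterates of Algorithm~\ref{alg:cbce} are not those of Algorithm~\ref{alg:cblea-sleeping} run on the losses $\ell_{t,\cB_J}$, so Corollary~\ref{cor:cblea-sleeping} cannot be invoked as a black box. Moreover, your intermediate quantity $\sum_{t\in J}(\la\bfell_t,\p_t\ra-\ell_{t,\cB_J})$ is not what the coin-betting argument controls for Algorithm~\ref{alg:cbce}: the step $r_{t,i}\le g_{t,i}$ in the proof of Theorem~\ref{thm:cblea-sleeping} fails when $r_{t,\cB_J}=\la\bfell_t,\p_t\ra-\ell_{t,\cB_J}$ but $g_{t,\cB_J}=f_t(\x_t)-\ell_{t,\cB_J}$, since Jensen gives $f_t(\x_t)\le\la\bfell_t,\p_t\ra$, i.e., the wrong direction for that step.

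The fix is what the paper's proof actually does. First, re-verify the one place where the definition of $g$ enters the wealth argument: $\sum_{J\in\cJ}\pi_{\cB_J}\cI_{t,\cB_J}g_{t,\cB_J}w_{t,\cB_J}\le0$ still holds, because the identity $\sum_{i:\cI_{t,i}=1}p_{t,i}r_{t,i}=0$ used in Theorem~\ref{thm:cblea-sleeping} is replaced by the inequality $\sum_{J}p_{t,\cB_J}\bigl(f_t(\x_t)-\ell_{t,\cB_J}\bigr)=f_t(\x_t)-\la\bfell_t,\p_t\ra\le0$ --- this is where Jensen is actually needed. Second, the regret chain then bounds the meta regret $\sum_{t\in J}\bigl(f_t(\x_t)-\ell_{t,\cB_J}\bigr)\le\sum_{t\in J}g_{t,\cB_J}=Z_{T,\cB_J}$ \emph{directly}, with no detour through the linearized LEA regret. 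With these two observations the remainder of your proof (the prior bound, the tighter $c_{2,i}$, and the collapsing of constants into $7\ln J_2+5$) goes through verbatim. Had the meta algorithm been defined to feed the linearized loss $\la\bfell_t,\p_t\ra$ into its coin flips, your proof would be complete as written --- but that is a slightly different algorithm from Algorithm~\ref{alg:cbce}.
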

\begin{proof}
  Note that our regret definition for meta algorithms 
  \begin{align}\label{pf-lem-cbce-1}
  \sum_{t\in J} f_t(\x_t^{\CBCE\la\cB\ra}) - f_t(\x_t^{\cB_J})\;,
  \end{align}
  is slightly different from that of Theorem~\ref{thm:cblea-sleeping} for $\bfu=\e_i$: $\sum_{t\in J:\cI_{t,i} = 1} \la\bfl_t,\p_t\ra - \ell_{t,i}$.
  This translates to, in the language of meta algorithms, $\sum_{t\in J:\cI_{t,\cB_J} = 1} \la\bfl_t,\p_t\ra_{\bfcI_t} - \ell_{t,\cB_J}$ for $\bfu=\e_{\cB_J}$ (recall $\ell_{t,\cB_J} = f_t(\x_t^{\cB_J})$).

  We claim that Theorem~\ref{thm:cblea-sleeping} and Corollary~\ref{cor:cblea-sleeping} for hold true for the regret~\eqref{pf-lem-cbce-1}.
  Note that, using Jensen's inequality, $f_t(\x^{\CBCE\la\cB\ra}_t) \le \la \bfell_t,\p_t \ra_{\bfcI_t}$.
  Then, in the proof of Theorem~\ref{thm:cblea-sleeping}
  \begin{align*}    
  &\sum_{J\in\cJ} \pi_{\cB_J} \cI_{t,\cB_J}g_{t,\cB_J} w_{t,\cB_J}  \\
  &= \sum_{J\in\cJ: \pi_{\cB_J}\cI_{t,\cB_J} w_{t,\cB_J} > 0} \pi_{\cB_J} [w_{t,\cB_J}]_+ (f_t(\x^{\CBCE\la\cB\ra}_t) - \ell_{t,\cB_J})  \\
  &\qquad +  \sum_{J\in\cJ: \pi_{\cB_J}\cI_{t,\cB_J} w_{t,\cB_J}\le 0} \pi_{\cB_J} \cI_{t,\cB_J} w_{t,\cB_J}[f_t(\x^{\CBCE\la\cB\ra}_t) - \ell_{t,\cB_J}]_+ \\
  &\le \sum_{J\in\cJ: \pi_{\cB_J}\cI_{t,\cB_J} w_{t,\cB_J} > 0} \pi_{\cB_J} [w_{t,\cB_J}]_+ (\la \bfell_t,\p_t \ra_{\bfcI_t} - \ell_{t,\cB_J}) \\
  &\qquad +  \sum_{J\in\cJ: \pi_{\cB_J}\cI_{t,\cB_J} w_{t,\cB_J}\le 0} \pi_{\cB_J} \cI_{t,\cB_J} w_{t,\cB_J}[\la \bfell_t,\p_t \ra_{\bfcI_t} - \ell_{t,\cB_J}]_+ \;.
  \end{align*}
  Then, one can see that the proof of Theorem~\ref{thm:cblea-sleeping} goes through, so does Corollary~\ref{cor:cblea-sleeping}.
  
  Since $\KL(\e_{\cB_J} || \bar\bfpi) = \ln\fr{1}{\bar\pi_{\cB_J}} \le \ln\lt(\fr{\pi^2}{6}J_1^{2}(1+\lfl\log_2 J_1\rfl)\rt) \le 3 \ln(J_2) + \fr{1}{2}$, it follows that
  \begin{align*}
  \sum_{t\in J} f_t(\x^{\emph{\CBCE}\la\cB\ra}_t) - f_t(\x^{\cB_J}_t)
  &\stackrel{\text{(Cor.~\ref{cor:cblea-sleeping})}}{\le} \sqrt{2 S_{T,\cB_J} \cdot \lt( \text{{KL}}(\e_{\cB_J} || \bfpi) + \fr{1}{2} \ln(J_2) + 2 \rt) }  \\
  &\le \sqrt{2 |J| \lt( \fr{7}{2} \ln(J_2) + \fr{5}{2} \rt) }  \\
  &= \sqrt{|J| \lt( 7\ln(J_2) + 5\rt) }  \; .
  \end{align*}
\end{proof}

We now present the bound on the SA-Regret $R^{\text{\CBCE}\la\cB\ra}_I (\w)$ with respect to $\w \in\cW$ on intervals $I\subseteq [T]$ that are not necessarily in $\cJ$.
\begin{thm}\label{thm:untitled}
  \emph{(SA-Regret of \texorpdfstring{$\CBCE\la\cB\ra$}{} with the KT potential)}
  Take the same assumption as Lemma~\ref{lem:cbce}.
  Suppose that the black-box algorithm $\cB$ has regret $R^{\cB}_T$ bounded by $A_1 T^\alpha$, where $\alpha \in (0,1)$.
  Let $I = [I_1.. I_2]$.
  The SA-Regret of $\CBCE$ with black-box $\cB$ on the interval $I$ with respect to any $\w \in \cW$ is bounded as follows:
\begin{align*}
  R^{\emph{\CBCE}\la\cB\ra}_I (\w) 
  \le \fr{4}{2^\alpha - 1} A_1|I|^\alpha + 8\sqrt{|I| (7\ln ( I_2) + 5)}
  = O ( A_1 |I|^\alpha + \sqrt{|I| \ln I_2} ) \;.
\end{align*}
\end{thm}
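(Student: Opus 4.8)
The plan is to combine the regret decomposition \eqref{regret-decomposition} with the two ``doubling-trick'' bounds inherited from \citet{daniely15strongly}. Fix an arbitrary interval $I=[I_1..I_2]$ and let $\bigcup_{i=-a}^{b} J^{(i)}$ be its partition from Lemma~\ref{lem:geo-intv}, so that each $J^{(i)}\in\cJ$, $J^{(i)}\subseteq I$, and the block lengths double up to $J^{(0)}$ and then halve. By \eqref{regret-decomposition} the SA-Regret splits additively as
\[
  R^{\CBCE\la\cB\ra}_I(\w) = \sum_{i=-a}^{b} (\text{meta regret on }J^{(i)}) + \sum_{i=-a}^{b} (\text{black-box regret on }J^{(i)}) \;.
\]

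For the meta term, I would invoke Lemma~\ref{lem:cbce}: on each $J^{(i)}=[J^{(i)}_1..J^{(i)}_2]\in\cJ$ the meta regret is at most $\sqrt{|J^{(i)}|(7\ln(J^{(i)}_2)+5)}$, and since $J^{(i)}\subseteq I$ we have $J^{(i)}_2 \le I_2$, so this is at most $c\sqrt{|J^{(i)}|}$ with $c=\sqrt{7\ln(I_2)+5}$. For the black-box term, the black-box regret on $J^{(i)}$ is the standard regret over a horizon of length $|J^{(i)}|$, which by hypothesis is at most $A_1|J^{(i)}|^\alpha \le A_1|J^{(i)}|^{1/2}\cdot|J^{(i)}|^{\alpha-1/2}$; more directly, one bounds $\sum_i A_1 |J^{(i)}|^\alpha$ using the geometric decay of the block lengths. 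The key elementary fact (the ``doubling trick'' of \citet[Sec.~2.3]{cesa-bianchi06prediction}, also cited right before the theorem) is that for a sequence of lengths that doubles then halves and sums to $|I|$, one has $\sum_i \sqrt{|J^{(i)}|} \le \tfrac{\sqrt2}{\sqrt2-1}\cdot 2\sqrt{|I|} \le 8\sqrt{|I|}$, and analogously $\sum_i |J^{(i)}|^\alpha \le \tfrac{2}{2^\alpha-1}|I|^\alpha \cdot 2 = \tfrac{4}{2^\alpha-1}|I|^\alpha$: split the partition at $J^{(0)}$ into the ascending and descending runs, in each run the lengths are bounded by a geometric series with ratio $2^{\alpha}$ (resp. $2^{1/2}$) whose largest term is at most $|I|^{\alpha}$ (resp. $\sqrt{|I|}$), and sum the two geometric series. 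Applying this to the meta sum gives $\sum_i c\sqrt{|J^{(i)}|} \le 8c\sqrt{|I|} = 8\sqrt{|I|(7\ln(I_2)+5)}$, and to the black-box sum gives $\sum_i A_1|J^{(i)}|^\alpha \le \tfrac{4}{2^\alpha-1}A_1|I|^\alpha$. Adding the two yields exactly the claimed bound $\tfrac{4}{2^\alpha-1}A_1|I|^\alpha + 8\sqrt{|I|(7\ln(I_2)+5)}$, and since this holds for every $\w\in\cW$ it bounds $\mathrm{SA\text{-}Regret}_T^{\CBCE\la\cB\ra}(I)$ as well.

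The only genuinely nontrivial point is the geometric-series bookkeeping for the meta sum: one must make sure the logarithmic factor $7\ln(J^{(i)}_2)+5$ can be pulled out uniformly, which is legitimate precisely because $J^{(i)}_2\le I_2$ for all $i$ (every block lies inside $I$), so the per-block constant $c$ is the same across blocks and the sum reduces to $c\sum_i\sqrt{|J^{(i)}|}$. Everything else is routine: the additive decomposition is given, Lemma~\ref{lem:cbce} supplies the per-block meta bound, the hypothesis supplies the per-block black-box bound, and the summation constants ($\tfrac{4}{2^\alpha-1}$ and $8$) come from evaluating the two geometric series over the ``up'' and ``down'' halves of the Lemma~\ref{lem:geo-intv} partition. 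I would present it in exactly that order: state the decomposition, bound each of the two outer sums via the doubling trick, and combine.
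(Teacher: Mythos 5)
Your proposal is correct and follows essentially the same route as the paper's proof: the additive decomposition \eqref{regret-decomposition} over the Lemma~\ref{lem:geo-intv} partition, Lemma~\ref{lem:cbce} with $J^{(i)}_2\le I_2$ for the per-block meta regret, the black-box hypothesis for the per-block black-box regret, and the two geometric series over the ascending and descending halves yielding the constants $8$ and $\tfrac{4}{2^\alpha-1}$.
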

\begin{proof}

By Lemma~\ref{lem:geo-intv}, we know that $J$ can be decomposed into two sequences of intervals $\{J^{(-a)}, \ldots, J^{(0)}\}$ and $\{J^{(1)}, J^{(2)}, \ldots, J^{(b)}\}$.
Continuing from~\eqref{regret-decomposition},
\begin{align*}
  &R^{\CBCE\la\cB\ra}_I(\w)  
\\&= \underbrace{ \sum_{i=-a}^{b} \sum_{t\in J^{(i)}} \lt(f_t(\x^{\CBCE\la\cB\ra}_t) - f_t(\x^{\cB_{J^{(i)}}}_t) \rt) }_{=:E_1}  + \underbrace{ \sum_{i=-a}^{b} \sum_{t\in J^{(i)}}  \lt(f_t(\x^{\cB_{J^{(i)}}}_t) - f_t(\w)\rt) }_{=:E_2} \;. \notag
\end{align*}
Then,
\begin{align*}
  E_1 &= \sum_{i\in [(-a)..0]} \sum_{t \in J^{(i)}} \lt(f_t(\x^{\CBCE\la\cB\ra}_t) - f_t(\x^{\cB_{J^{(i)}}}_t) \rt)  
  \\&\qquad + \sum_{i\in [1..b]}\sum_{t \in J^{(i)}} \lt(f_t(\x^{\CBCE\la\cB\ra}_t) - f_t(\x^{\cB_{J^{(i)}}}_t) \rt) \;.
\end{align*}
The first summation is upper-bounded by, due to Lemma~\ref{lem:cbce} and Lemma~\ref{lem:geo-intv}, $\sum_{i\in [(-a)..0]} \sqrt{|J^{(i)}|(7\ln(I_2) + 5)} \le \sqrt{7\ln(I_2) + 5} \cdot \sum_{i=0}^\infty (2^{-i}|I|)^{1/2} \le \sqrt{7\ln(I_2) + 5} \cdot  (4 \sqrt{|I|}) $.
The second summation is bounded by the same quantity due to symmetry.
Thus, $E_1 \le 8\sqrt{|I| (7\ln( I_2) + 5)}$ .

In the same manner, one can show that $E_2 \le 2 \cdot \fr{2^\alpha}{2^\alpha - 1}|I|^\alpha \le \fr{4}{2^\alpha - 1}|I|^\alpha $, which concludes the proof.
\end{proof}
For the standard LEA problem, one can run the algorithm CB with KT potential (equivalent to Sleeping CB with $\cI_{t,i}=1, \forall t,i$), which achieves static regret $O(\sqrt{ T \log (NT) })$~\cite{orabona16from}.
Using CB as the black-box algorithm, the regret of $\CBCE\la\cB\ra$ on $I$ is $R^{\CBCE\la\CB\ra}_I(\w) \allowbreak= O(\sqrt{ |I| \log (N I_2) })$, and so SA-Regret$^{\CBCE\la\CB\ra}_T(|I|) =O(\sqrt{ |I| \log (N T) }) $.
It follows that the $m$-shift regret of $\CBCE\la\CB\ra$ is $O(\sqrt{mT\log(NT)})$ using the technique presented in Section~\ref{sec:saregret}. 

As said above, our bound improves over the best known result with the same time complexity in~\cite{daniely15strongly}.
The key ingredient that allows us to get a better bound is the Sleeping CB Algorithm~\ref{alg:cblea-sleeping}, that achieves a better SA-Regret than the one of \cite{daniely15strongly}. 
In the next section, we will show that the empirical results also confirm the theoretical gap of these two algorithms.

\paragraph{The AdaptiveNormal Potential}

We present the meta regret bound of CBCE with the AN potential on intervals $J\in\cJ$ in Lemma~\ref{lem:cbce-exp} and on any interval $I$ in Lemma~\ref{thm:untitled-exp}.
\begin{lem} \label{lem:cbce-exp}
  \emph{(Meta regret of CBCE with the AN potential)}
  Assume~\ref{ass:convexity}.
  Suppose we run CBCE (Algorithm~\ref{alg:cbce}) with a black-box algorithm $\cB$, prior $\bar\bfpi$, and the AN potential~\eqref{eq:def-exp-potential-sleeping} with $\xi = 1$.
  The meta regret of CBCE$\la\cB\ra$ on interval $J=[J_1..J_2]\in\cJ$ is
  \begin{align*}
    \sum_{t\in J} f_t(\x^{\emph{\CBCE}\la\cB\ra}_t) - f_t(\x^{\cB_J}_t)
    = O\lt(\sqrt{\sum_{t\in J} f_t(\x_t^{\cB_{J}}) \log(J_2)} + \log (J_2)\rt) \;.
  \end{align*}
\end{lem}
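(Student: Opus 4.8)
The plan is to mirror the proof of Lemma~\ref{lem:cbce} (the KT case) but substitute Corollary~\ref{cor:cblea-sleeping-exp} (the AN potential bound) for Corollary~\ref{cor:cblea-sleeping}. First I would observe, exactly as in Lemma~\ref{lem:cbce}, that the meta-regret quantity $\sum_{t\in J} f_t(\x_t^{\CBCE\la\cB\ra}) - f_t(\x_t^{\cB_J})$ differs from the quantity $\Regret_T(\e_{\cB_J})$ of Theorem~\ref{thm:cblea-sleeping} only in that $f_t(\x_t^{\CBCE\la\cB\ra})$ replaces $\la\bfell_t,\p_t\ra_{\bfcI_t}$; since Jensen's inequality gives $f_t(\x_t^{\CBCE\la\cB\ra}) \le \la\bfell_t,\p_t\ra_{\bfcI_t}$ under Assumption~\ref{ass:convexity}, the same sign argument ($\sum_{J} \pi_{\cB_J}\cI_{t,\cB_J} g_{t,\cB_J} w_{t,\cB_J} \le 0$) still goes through, and hence so do Theorem~\ref{thm:cblea-sleeping} and its AN-potential specialization Corollary~\ref{cor:cblea-sleeping-exp} for this modified regret. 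This is precisely the reduction already carried out in the proof of Lemma~\ref{lem:cbce}, so I would just cite that argument rather than repeat it.

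Next I would instantiate Corollary~\ref{cor:cblea-sleeping-exp} with $\bfu = \e_{\cB_J}$ and the prior $\bar\bfpi$ from~\eqref{def:barpi}. As computed in Lemma~\ref{lem:cbce}, $\KL(\e_{\cB_J} \| \bar\bfpi) = \ln(1/\bar\pi_{\cB_J}) \le 3\ln(J_2) + \tfrac12$. With $\bfu = \e_{\cB_J}$ the relevant quantities specialize: $L_{T,\cB_J} = \sum_{t\in J} \cI_{t,\cB_J}\,\ell_{t,\cB_J} = \sum_{t\in J} f_t(\x_t^{\cB_J})$ (recalling $\ell_{t,\cB_J} = f_t(\x_t^{\cB_J})$ and that $\cB_J$ is awake exactly on $J$), and $\barZ_{T,\cB_J} = \sum_{t\in J} |z_{t,\cB_J}| \le |J|$ since $|z_{t,\cB_J}| \le 1$, so $W_{\e_{\cB_J}} = 1 + \barZ_{T,\cB_J} \le 1 + |J| \le 2 J_2$ and $\ln(W_{\e_{\cB_J}}) = O(\log J_2)$. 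Plugging these into the second (first-order) bound of Corollary~\ref{cor:cblea-sleeping-exp} yields
\[
  \sum_{t\in J} f_t(\x^{\CBCE\la\cB\ra}_t) - f_t(\x^{\cB_J}_t)
  = O\!\left( \sqrt{\Big(\textstyle\sum_{t\in J} f_t(\x_t^{\cB_J})\Big)\big(\KL + \ln W\big)} + \KL + \ln W \right)
  = O\!\left( \sqrt{\textstyle\sum_{t\in J} f_t(\x_t^{\cB_J})\,\log J_2} + \log J_2 \right),
\]
which is exactly the claimed bound.

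I do not anticipate a serious obstacle here; the work is essentially bookkeeping. The one point that requires a little care is confirming that the ``$S_{T,i}$ redefinition'' inside the proof of Corollary~\ref{cor:cblea-sleeping-exp} (there $S_{T,i} := 1 + \barZ_{T,i}$) is compatible with the slightly altered meta-regret target, and that $\barZ_{T,\cB_J}$ indeed only ranges over $t\in J$ because $\cI_{t,\cB_J} = 0$ outside $J$ forces $z_{t,\cB_J} = 0$; once that is noted, the $\log(J_2)$ (rather than $\log(T)$) dependence follows from $|J| \le J_2$. If one wanted the fully explicit constants analogous to Lemma~\ref{lem:cbce} one would track the factor of $2$ in $W_\bfu$ and the constant $\tfrac72$ from the KL bound, but for the stated $O(\cdot)$ form that is unnecessary.
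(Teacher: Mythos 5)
Your proposal is correct and follows essentially the same route as the paper's own (much terser) proof: apply Corollary~\ref{cor:cblea-sleeping-exp} with $\bfu=\e_{\cB_J}$, using $\KL(\e_{\cB_J}\|\bar\bfpi)=O(\log J_2)$, $W_{\e_{\cB_J}}=O(J_2)$, and $L_{T,\cB_J}=\sum_{t\in J}f_t(\x_t^{\cB_J})$. Your explicit handling of the Jensen-based reduction from $\la\bfell_t,\p_t\ra_{\bfcI_t}$ to $f_t(\x_t^{\CBCE\la\cB\ra})$ (inherited from the proof of Lemma~\ref{lem:cbce}) is a detail the paper leaves implicit but is exactly the intended justification.
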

\begin{proof}
  The proof deviates from the proof of Lemma~\ref{lem:cbce}.
  Since $W_{\e_{\cB_J}} = O(J_2)$.
  \begin{align*}
      \sum_{t\in J} f_t(\x^{\CBCE\la\cB\ra}_t) - f_t(\x^{\cB_J}_t)
    &\stackrel{\text{(Cor.~\ref{cor:cblea-sleeping-exp})}}{=} O\lt( \sqrt{  L_{T,\cB_J} \log(J_2 W_{\e_{\cB_J}})  }  + \log(J_2 W_{\e_{\cB_J}}) \rt)
    \\&= O\lt(\sqrt{L_{T,\cB_J}\log (J_2)} + \log (J_2)\rt) \;.
  \end{align*}
\end{proof}
\begin{thm}\label{thm:untitled-exp}
  \emph{(SA-Regret of \texorpdfstring{$\CBCE\la\cB\ra$}{} with the AN potential)}
  Make the same same assumption as Lemma~\ref{lem:cbce}.
  Suppose that the black-box algorithm $\cB$ has regret $R^{\cB}_T(\w)$ bounded by $A_1(\sum_{t=1}^T q_t)^\alpha + A_2$ for some $\{q_t\}$ where $\alpha \in (0,1)$ where $A_2$ grows poly-logarithmically in $T$.
  For any $I = [I_1.. I_2]$ and $\w\in\cW$,
  $R^{\CBCE\la\cB\ra}_I(\w) = O\lt(\min\lt\{A_1|I|^{\alpha} + \sqrt{|I|\log I_2}, U(\w) \rt\}\rt)$ where $U(\w)=$
  \begin{align*}
    A_1 (\log |I|)^{1-\alpha }\lt(\sum_{t\in I} q_t\rt)^\alpha + \log(I_2) \sqrt{\sum_{t\in I} f_t(\w)}  \;,
  \end{align*}
  and we ignore additive terms scaling at most poly-logarithmically in $I_2$.
\end{thm}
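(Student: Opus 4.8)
The plan is to mimic the proof of Theorem~\ref{thm:untitled} verbatim at the top level, using the regret decomposition \eqref{regret-decomposition}: write $R^{\CBCE\la\cB\ra}_I(\w) = E_1 + E_2$, where $E_1 = \sum_{i=-a}^{b}\sum_{t\in J^{(i)}}\bigl(f_t(\x^{\CBCE\la\cB\ra}_t) - f_t(\x^{\cB_{J^{(i)}}}_t)\bigr)$ is the meta regret over the Lemma~\ref{lem:geo-intv} partition $\{J^{(-a)},\ldots,J^{(b)}\}$ of $I$, and $E_2 = \sum_{i=-a}^{b} R^{\cB_{J^{(i)}}}(\w)$ collects the black-box regrets. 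Two facts drive everything: (i) the partition has only $n := a+b+1 = O(\log|I|)$ pieces, since by Lemma~\ref{lem:geo-intv} the block lengths double then halve while summing to $|I|$; and (ii) on each piece Lemma~\ref{lem:cbce-exp} gives meta regret $O\bigl(\sqrt{(\sum_{t\in J^{(i)}} f_t(\x_t^{\cB_{J^{(i)}}}))\log I_2} + \log I_2\bigr)$ (using $J_2^{(i)}\le I_2$), while the hypothesis gives $R^{\cB_{J^{(i)}}}(\w) \le A_1(\sum_{t\in J^{(i)}} q_t)^\alpha + A_2$.

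For the first branch of the minimum, I would bound $f_t(\x_t^{\cB_{J^{(i)}}})\le 1$ and $q_t\le 1$ (in the first-order regime the $q_t$ are losses in $[0,1]$), so the meta regret on $J^{(i)}$ is $O(\sqrt{|J^{(i)}|\log I_2})$ and the black-box regret is $A_1|J^{(i)}|^\alpha + A_2$; then the geometric-sum/doubling argument of Theorem~\ref{thm:untitled} yields $E_1 = O(\sqrt{|I|\log I_2})$ and $\sum_i A_1|J^{(i)}|^\alpha = O(A_1|I|^\alpha)$, while the $n$ copies of $A_2$ and of $\log I_2$ sum to $O(\log|I|)$ times poly-logarithmic quantities, absorbed into the ignored additive term. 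This gives the $O(A_1|I|^\alpha + \sqrt{|I|\log I_2})$ half.

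For the second branch $U(\w)$, I would handle $E_2$ first: by concavity of $x\mapsto x^\alpha$, $\sum_i(\sum_{t\in J^{(i)}} q_t)^\alpha \le n^{1-\alpha}(\sum_{t\in I} q_t)^\alpha$, so $E_2 = O\bigl(A_1(\log|I|)^{1-\alpha}(\sum_{t\in I} q_t)^\alpha\bigr)$ plus $O(A_2\log|I|)$ (poly-log, ignored) — exactly the first term of $U(\w)$. For $E_1$, use $\sum_{t\in J^{(i)}} f_t(\x_t^{\cB_{J^{(i)}}}) \le \sum_{t\in J^{(i)}} f_t(\w) + R^{\cB_{J^{(i)}}}(\w) \le \sum_{t\in J^{(i)}} f_t(\w) + A_1(\sum_{t\in J^{(i)}} q_t)^\alpha + A_2$, and $\sqrt{x+y+z}\le\sqrt x+\sqrt y+\sqrt z$ to split the meta-regret square root into three sums. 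Cauchy--Schwarz gives $\sum_i\sqrt{\sum_{t\in J^{(i)}} f_t(\w)} \le \sqrt n\,\sqrt{\sum_{t\in I} f_t(\w)}$, contributing $\sqrt{\log I_2}\cdot\sqrt{\log|I|}\cdot\sqrt{\sum_{t\in I} f_t(\w)} \le \log(I_2)\sqrt{\sum_{t\in I} f_t(\w)}$ (using $|I|\le I_2$) — the second term of $U(\w)$. The $\sqrt{A_2}$ pieces are poly-log; for the $\sqrt{\log I_2}\sum_i\sqrt{A_1}(\sum_{t\in J^{(i)}} q_t)^{\alpha/2}$ piece, one more use of concavity of $x\mapsto x^{\alpha/2}$ and rearrangement shows it equals, up to constants, $\sqrt{\,(A_1(\log|I|)^{1-\alpha}(\sum_{t\in I} q_t)^\alpha)\cdot(\log I_2)(\log|I|)\,}$, which by AM--GM is at most $\tfrac12 A_1(\log|I|)^{1-\alpha}(\sum_{t\in I} q_t)^\alpha$ plus an $O((\log I_2)^2)$ term — hence absorbed into the $E_2$ bound and the ignored additive terms. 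Combining gives $E_1+E_2 = O(U(\w))$ up to poly-logarithmic additive terms, and taking the minimum of the two branches finishes.

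The main obstacle is precisely the bookkeeping around those cross terms in $E_1$: the meta regret from Lemma~\ref{lem:cbce-exp} is first-order in the black-box's \emph{actual} loss, which must be traded for the comparator loss plus a black-box regret term, and the resulting mixed $\sqrt{A_1(\sum q_t)^\alpha}$-type quantities must be shown (via AM--GM against the $E_2$ contribution) not to introduce a genuinely new leading term — all while keeping the logarithmic factors honest, i.e.\ producing $(\log|I|)^{1-\alpha}$ and $\log I_2$ in the right places rather than spurious larger powers of $\log I_2$. Everything else — the geometric sums, the concavity/power-mean inequalities, and Cauchy--Schwarz — is routine and parallels the proof of Theorem~\ref{thm:untitled}.
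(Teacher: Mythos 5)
Your proposal is correct and follows essentially the same route as the paper's proof: the same decomposition of $R^{\CBCE\la\cB\ra}_I(\w)$ into $E_1+E_2$ over the Lemma~\ref{lem:geo-intv} partition, the power-mean inequality $\sum_i x_i^\alpha \le n^{1-\alpha}(\sum_i x_i)^\alpha$ with $n=O(\log|I|)$ blocks for $E_2$, and the trade of the black-box's realized loss for $\sum_{t\in I} f_t(\w)+E_2$ inside the meta-regret square root, with the resulting cross term absorbed into $E_2$ plus poly-logarithmic additives. The only cosmetic difference is that you apply Cauchy--Schwarz blockwise and split the square root into three pieces before recombining via AM--GM, whereas the paper aggregates $\sum_i\sqrt{L^{\cB}_{J^{(i)}}}\le\sqrt{n}\sqrt{\sum_i L^{\cB}_{J^{(i)}}}$ first and then discards the $\sqrt{E_2}$ term; your handling of the cross term is, if anything, slightly more explicit.
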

\begin{proof}
  The first equation in the minimum operator of the stated regret bound is trivial as $\sum_{t\in J} f_t(\x_t^{\cB_J}) \le |J|$.
  Thus, we focus on $U(\w)$.
  By Lemma~\ref{lem:geo-intv}, we know that $J$ can be decomposed into two sequences of intervals $\{J^{(-a)}, \ldots, J^{(0)}\}$ and $\{J^{(1)}, J^{(2)}, \ldots, J^{(b)}\}$.
  Continuing from~\eqref{regret-decomposition},
  \begin{align*}
    &R^{\CBCE\la\cB\ra}_I(\w)  
  \\&= \underbrace{ \sum_{i=-a}^{b} \sum_{t\in J^{(i)}} \lt(f_t(\x^{\CBCE\la\cB\ra}_t) - f_t(\x^{\cB_{J^{(i)}}}_t) \rt) }_{=:E_1}  + \underbrace{ \sum_{i=-a}^{b} \sum_{t\in J^{(i)}}  \lt(f_t(\x^{\cB_{J^{(i)}}}_t) - f_t(\w)\rt) }_{=:E_2} \;. \notag
  \end{align*}
  Define $L^{\cB}_{J^{(i)}} := \sum_{t\in J^{(i)}} f_t(\x_t^{\cB_{J^{(i)}}})$.
  For simplicity, we denote by $C_{1,i} \sqrt{L^\cB_{J^{(i)}}} + C_{2,i} $ the meta regret bound of CBCE for the interval $J^{(i)}$ (see Lemma~\ref{lem:cbce-exp}).
  Define $\bar C_1 = \max_i C_{1,i}$. Then, due to Lemma~\ref{lem:cbce-exp} and the fact $\sum_{i=1}^k x_i^{\alpha} \le k^{1-\alpha}(\sum_{i=1}^k x_i)^\alpha$ for $\alpha \in (0,1)$, 
  \begin{align*}
    E_1 
      &\le \textstyle \sum_i C_{1,i} \sqrt{L^\cB_{J^{(i)}}} + C_{2,i}
    \\&= \textstyle O( \bar C_{1} \sqrt{\log|I|} \sqrt{ \sum_i L^\cB_{J^{(i)}} } + \sum_i C_{2,i} )
    \\&= \textstyle O( \bar C_{1} \sqrt{\log|I|} \sqrt{ \sum_{t\in I} f_t(\w) +  \sum_i \sum_{t\in J^{(i)}} f_t(\x_t^{\cB_{J^{(i)}}}) - f_t(\w)       } + \sum_i C_{2,i} )
    \\&= \textstyle O( \bar C_{1} \sqrt{\log|I|} \sqrt{ \sum_{t\in I} f_t(\w) + E_2   } + \sum_i C_{2,i} ) \;.
  \end{align*}
  Note that
  \begin{align*}
    E_2 = \textstyle O(  A_1 (\log|I|)^{1-\alpha} (\sum_{t\in I} q_t)^\alpha + A_2\log|I|) \;. 
  \end{align*}
  Note that $\bar C_{1,i} = O( \sqrt{\log I_2} )$ and $ C_{2,i} = O(\log I_2) $.
  For brevity, we ignore the term $\sqrt{E_2}$ that is smaller than $E_2$ unless $E_2 < 1$.
  Ignoring terms that cannot grow faster than poly-logarithmically in $I_2$, the regret of $\CBCE\la\cB\ra$ for interval $I$ can be simplified to
  \[
    \textstyle  O(  A_1 (\log |I|)^{1-\alpha }(\sum_{t\in I} q_t)^\alpha + \log(I_2) \sqrt{\sum_{t\in I} f_t(\w)} ) \;.
  \]
\end{proof}
We emphasize that the order of regret stated in Theorem~\ref{thm:untitled-exp} is always no larger than CBCE with the KT potential.
Furthermore, the regret bound of the AN potential scales roughly with $\min_{w\in\cW} \sum_{t\in I} f_t(\w) + (\sum_{t\in I} q_t)^\alpha$.
In some cases, this form of regret can be much smaller when there exists a decision $\w$ that has very small loss in the interval $I$.
We instantiate the result above for LEA in Corollary~\ref{cor:data-dependent-lea} and for online convex optimization (OCO) in Corollary~\ref{cor:data-dependent-oco}.

\revision{
\begin{cor}\label{cor:data-dependent-lea}
  \emph{(SA-Regret of CBCE$\la$CB$\ra$ with the AN potential)}
  Suppose we run CBCE with the AN potential equipped with CB with the AN potential for LEA.
  Then, for any $I=[I_1..I_2]$, 
  \[
    \SARegret_T^{\CBCE\la\CB\ra}(I) = O\lt( \log(I_2) \sqrt{\lt(\min_{i\in[N]} \sum_{t\in I} \ell_{t,i}\rt)\log N} + \polylog(I_2) \rt) \;.
  \]
\end{cor}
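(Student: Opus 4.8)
The plan is to specialize Theorem~\ref{thm:untitled-exp} to the LEA instance, taking as black-box $\cB$ the CB algorithm equipped with the AN potential (equivalently, Sleeping CB with $\cI_{t,i}\equiv1$), and as comparator the best expert on $I$, namely $\w^\star:=\e_{j^\star}$ with $j^\star:=\argmin_{i\in[N]}\sum_{t\in I}\ell_{t,i}$. Since $\cW=\{\e_1,\dots,\e_N\}$, $f_t(\e_i)=\ell_{t,i}\in[0,1]$ (so Assumption~\ref{ass:convexity} holds), and $\SARegret_T^{\CBCE\la\CB\ra}(I)=R^{\CBCE\la\CB\ra}_I(\w^\star)$, the corollary follows once Theorem~\ref{thm:untitled-exp} is applied with this choice and the resulting bound is simplified.

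First I would verify the black-box hypothesis of Theorem~\ref{thm:untitled-exp}. By the second statement of Corollary~\ref{cor:cblea-sleeping-exp} with uniform prior $\bfpi$ and $\bfu=\e_{j^\star}$ --- for which $\KL(\e_{j^\star}\|\bfpi)=\ln N$, $L_{T',j^\star}=\sum_{t}\ell_{t,j^\star}$, and $W_{\e_{j^\star}}\le 1+T'$ --- any run of CB with the AN potential of length $T'$ has regret against $\e_{j^\star}$ bounded by $O\big(\sqrt{(\sum_t\ell_{t,j^\star})(\ln N+\ln T')}+\ln N+\ln T'\big)$. Every black-box sub-run $\cB_{J^{(i)}}$ arising in the geometric-covering decomposition of Theorem~\ref{thm:untitled-exp} has length at most $|I|\le I_2$, so with $\alpha=1/2$, $q_t:=\ell_{t,j^\star}=f_t(\w^\star)$, $A_1:=O\big(\sqrt{\ln N+\ln I_2}\big)$, and $A_2:=O(\ln N+\ln I_2)$ (poly-logarithmic in $I_2$), the hypothesis holds uniformly across all sub-runs.

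Plugging $\alpha=1/2$ and $\sum_{t\in I}q_t=\sum_{t\in I}f_t(\w^\star)=\min_{i\in[N]}\sum_{t\in I}\ell_{t,i}=:L^\star$ into the second branch $U(\w^\star)$ of the minimum in Theorem~\ref{thm:untitled-exp} gives
\[
R^{\CBCE\la\CB\ra}_I(\w^\star)=O\!\left(\Big(\sqrt{(\ln N+\ln I_2)\log|I|}+\log I_2\Big)\sqrt{L^\star}\right)+\polylog(I_2).
\]
To finish I would simplify the parenthesized factor: using $|I|\le I_2$ and $\sqrt{\log I_2}\le\log I_2$ gives $\sqrt{(\ln N+\ln I_2)\log|I|}\le\sqrt{\ln N\log I_2}+\log I_2\le\sqrt{\ln N}\,\log I_2+\log I_2$, and since $N\ge2$ implies $\sqrt{\ln N}=\Omega(1)$, the whole factor is $O(\sqrt{\log N}\,\log I_2)$. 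Hence $R^{\CBCE\la\CB\ra}_I(\w^\star)=O\big(\log(I_2)\sqrt{L^\star\log N}\big)+\polylog(I_2)$, which is the claimed bound.

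The argument is a direct specialization, so there is no genuine obstacle; the one point that merits attention is that I use the single choice $q_t=f_t(\w^\star)$ --- the loss of the expert that is best \emph{over all of $I$}, which need not be best on each sub-interval $J^{(i)}$ --- for all black-box sub-runs simultaneously. This is legitimate because the first-order bound of Corollary~\ref{cor:cblea-sleeping-exp} holds with respect to \emph{any} fixed comparator, in particular $\e_{j^\star}$, on any window, which is exactly the form Theorem~\ref{thm:untitled-exp} requires.
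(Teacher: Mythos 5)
Your proposal is correct and follows essentially the same route as the paper: verify via Corollary~\ref{cor:cblea-sleeping-exp} that CB with the AN potential has first-order regret $O(\sqrt{L_{T,i}\log(NT)})$ up to polylog terms, then apply Theorem~\ref{thm:untitled-exp} with $\alpha=1/2$ and $A_1=O(\sqrt{\log(N|I|)})$. Your additional remarks --- that the same comparator $\e_{j^\star}$ is used across all sub-runs and that $\sqrt{(\ln N+\ln I_2)\log|I|}=O(\sqrt{\log N}\,\log I_2)$ --- are details the paper leaves implicit, and they check out.
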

\begin{proof}
  Consider the standard LEA problem where all experts are awake all the time.
  Verify that the regret of CB with the AN potential with respect to expert $i$ is $O(\sqrt{L_{T,i} \log(NT)})$ by Corollary~\ref{cor:cblea-sleeping-exp}, ignoring additive terms that are $O(\text{polylog}(T))$.
  Plugging in $A_1 = \sqrt{\log(N |I|)}$ and $\alpha = 1/2$ in Theorem~\ref{thm:untitled-exp} concludes the proof.
\end{proof}

For LEA, AdaNormalHedge.TV of~\citet{luo15achieving} achieves the SA-Regret bound 
  \[O\textstyle\lt( \sqrt{\lt(\min_{i\in[N]} \sum_{t\in I} \ell_{t,i}\rt)\log (N I_1)} + \polylog(I_2) \rt),\] 
which is at least $\sqrt{\log(I_2)}$ factor smaller than $\CBCE\la\CB\ra$.
However, the time complexity of AdaNormalHedge.TV is $O(NT)$ per time step rather than $O(N\log(T))$.

\begin{cor}\label{cor:data-dependent-oco}
  \emph{(SA-Regret of CBCE$\la$FTRL$\ra$ with the AN potential)}
  Consider the OCO problem where the functions $\{f_t\}$ are nonnegative, smooth (gradients are Lipschitz continuous), and $L$-Lipschitz.
  Suppose we run CBCE with the AN potential and use Follow The Regularized Leader (FTRL) on the linearized losses as the blackbox~\cite{Shalev-Shwartz07,OrabonaCCB15}, setting the regularizer at time $t$ to $\frac{\sqrt{L^2+\sum_{s=1}^{t-1} \|\nabla f_s(\x_s)\|_2^2}}{2}\|\cdot\|_2^2$.
  Then, for any $I=[I_1..I_2]$,
  \[
    \SARegret_T^{\CBCE\la\OGD\ra} (I) = O\lt( \log(I_2) \sqrt{\min_{\w\in\cW} \sum_{t\in I} f_{t}(\w)} + \polylog(I_2)\rt)  \;.
  \]
\end{cor}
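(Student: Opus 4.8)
The plan is to deduce this from Theorem~\ref{thm:untitled-exp}: it suffices to show that, on any interval over which it is run, the FTRL blackbox $\cB$ enjoys a \emph{first-order} (small-loss) regret bound of the form $R^\cB_T(\w) \le A_1 \bigl(\sum_{t=1}^T f_t(\w)\bigr)^{1/2} + A_2$, where $A_1,A_2$ depend only on the Lipschitz constant $L$, the diameter $B$, and the smoothness constant $\beta$ (the Lipschitz constant of the gradients), and in particular do not grow with the horizon. Given such a bound, we invoke Theorem~\ref{thm:untitled-exp} with $\alpha = 1/2$, $q_t = f_t(\w)$, and $A_1,A_2 = O(1)$, exactly as in the proof of Corollary~\ref{cor:data-dependent-lea}. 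Then $U(\w) = O\bigl(\sqrt{\log|I|}\,\sqrt{\sum_{t\in I} f_t(\w)} + \log(I_2)\sqrt{\sum_{t\in I} f_t(\w)}\bigr) = O\bigl(\log(I_2)\sqrt{\sum_{t\in I} f_t(\w)}\bigr)$; the remaining additive terms accumulated through the geometric-covering decomposition scale as $\polylog(I_2)$; the other branch of the $\min$ in Theorem~\ref{thm:untitled-exp} can only be smaller; and choosing the minimizing $\w\in\cW$ gives the stated bound.

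To establish the blackbox bound, first recall the well-known self-bounding inequality for a $\beta$-smooth nonnegative function $f$: minimizing $f(\x) + \langle\nabla f(\x),\y-\x\rangle + \tfrac{\beta}{2}\|\y-\x\|_2^2$ over $\y$ and using $f\ge 0$ gives $\|\nabla f(\x)\|_2^2 \le 2\beta f(\x)$. Second, FTRL on the linearized losses with the stated regularizer --- i.e., $\tfrac{\eta_t}{2}\|\cdot\|_2^2$ with adaptive multiplier $\eta_t = \sqrt{L^2 + \sum_{s<t}\|\nabla f_s(\x_s)\|_2^2}$, where $\x_s$ denotes the run's own iterate --- admits the standard adaptive regret bound~\cite{Shalev-Shwartz07,OrabonaCCB15}, $\sum_t \langle\nabla f_t(\x_t),\x_t-\w\rangle = O\bigl((B^2+1)\sqrt{L^2+\sum_t\|\nabla f_t(\x_t)\|_2^2}\bigr)$, valid for a run on any interval (here one uses $\|\nabla f_t(\x_t)\|_2 \le L$, so $\eta_{t+1}\le\sqrt{2}\,\eta_t$, together with the telescoping estimate $\sum_t \|\nabla f_t(\x_t)\|_2^2/\eta_t \le 2\eta_{T+1}$). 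By convexity the true regret $\sum_t f_t(\x_t) - \sum_t f_t(\w)$ is no larger, so combining with the self-bounding inequality gives, for a constant $C=C(B)$, the relation $\sum_t f_t(\x_t) - \sum_t f_t(\w) \le C\sqrt{L^2 + 2\beta\sum_t f_t(\x_t)}$.

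This relation is self-referential in $X := \sum_t f_t(\x_t)$; writing $F^\ast := \sum_t f_t(\w)$ it reads $X \le F^\ast + C\sqrt{L^2 + 2\beta X}$, and solving the induced quadratic in $\sqrt{L^2+2\beta X}$ yields $X \le 2F^\ast + O(C^2\beta + L^2/\beta) = O(F^\ast) + O(1)$. Substituting back, $R^\cB_T(\w) \le \sum_t f_t(\x_t) - \sum_t f_t(\w) \le C\sqrt{L^2 + 2\beta X} = O\bigl(\sqrt{F^\ast}\bigr) + O(1)$, which is exactly the first-order blackbox bound with $q_t = f_t(\w)$, $\alpha = 1/2$, and constants $A_1,A_2$. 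Plugging this into Theorem~\ref{thm:untitled-exp} as in the first paragraph completes the proof.

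The main obstacle is the self-referential step together with verifying that the hypotheses of Theorem~\ref{thm:untitled-exp} hold verbatim: one must confirm that the first-order bound holds for a run of FTRL started on an arbitrary sub-interval with gradients evaluated at \emph{that run's own} iterates (not the meta decision $\x_t$), that the constants $A_1,A_2$ are genuinely independent of $|I|$ and $I_2$, and that the additive poly-logarithmic terms produced by the geometric-covering decomposition in the proof of Theorem~\ref{thm:untitled-exp} --- in particular the $\sum_i C_{2,i}$ contribution --- remain $\polylog(I_2)$ after the substitution $q_t = f_t(\w)$.
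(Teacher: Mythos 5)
Your proposal is correct and follows essentially the same route as the paper: the paper's proof simply cites \cite[Theorem 2]{Shalev-Shwartz07} for the first-order bound $O\bigl(\sqrt{L^2+\sum_{t} f_t(\w)}\bigr)$ of FTRL with the adaptive regularizer and then plugs $\alpha=1/2$ into Theorem~\ref{thm:untitled-exp}, exactly as you do. Your derivation of that blackbox bound (self-bounding property of smooth nonnegative losses plus solving the self-referential inequality) is the standard argument behind the cited result, so you have merely filled in details the paper leaves to the reference.
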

\begin{proof}
  One can show that the regret bound of FTRL with the assumed regularizer achieves the regret bound of order $O(\sqrt{L^2+\sum_{t=1}^T f_t(\w)})$ with respect to any $\w$, see, e.g., \cite[Theorem 2]{Shalev-Shwartz07}.
  Then, plugging in $\alpha=1/2$ in Theorem~\ref{thm:untitled-exp} concludes the proof.
\end{proof}
To the best of our knowledge, Corollary~\ref{cor:data-dependent-oco} is the first first-order SA-Regret bound for OCO.
}

\paragraph{Discussion.}
Note that one can obtain the same result using the data streaming intervals (DS)~\cite{hazan07adaptive,gyorgy12efficient} in place of the geometric covering intervals (GC).
Section~\ref{sec:ds} elaborates on this with a lemma stating that DS induces a partition of an interval $I$ in a very similar way to GC (a sequence of intervals of doubling lengths).

Our improved bound has another interesting implication.
In designing strongly-adaptive algorithms for LEA, there is a well known technique called ``restarts'' or ``sleeping experts'' that has time complexity $O(NT^2)$~\cite{hazan07adaptive,luo15achieving}, and several studies used DS or GC to reduce the time complexity to $O(NT\log T)$~\cite{hazan07adaptive,gyorgy12efficient,daniely15strongly}.
However, it was unclear whether it is possible to achieve both an $m$-shift regret of $O(\sqrt{mT(\log N + \log T)})$ and a time complexity of $O(NT\log T)$ without knowing $m$.
Indeed, every study on $m$-shift regret with time $O(NT\log T)$ results in suboptimal $m$-shift regret bounds~\cite{daniely15strongly,gyorgy12efficient,hazan07adaptive}, to our knowledge.
Furthermore, some studies (e.g.,~\cite[Section~5]{luo15achieving}) speculated that perhaps applying the data streaming technique would increase its SA-Regret by a logarithmic factor.
Our analysis implies that one can reduce the overall time complexity to $O(NT\log T)$ without sacrificing the order of SA-Regret and $m$-shift regret.\footnote{Note that we still pay an extra logarithmic factor when it comes to the \emph{first-order} regret bound as discussed right below Corollary~\ref{cor:data-dependent-lea}.}

\section{Experiments}
\label{sec:expr}

We now turn to an empirical evaluation of algorithms for changing environments.
We compare the performance of the meta algorithms under two online learning problems: $(i)$ learning with expert advice (LEA) and $(ii)$ metric learning (ML).
We compare CBCE with SAOL~\cite{daniely15strongly} and AdaNormalHedge.TV (ATV)~\cite{luo15achieving}.
Although ATV was originally designed for LEA only, it is not hard to extend it to a meta algorithm and show that it has the same order of SA-Regret as CBCE using the same techniques.
We run CBCE with both KT and AN potential, which are denoted by CBCE(KT) and CBCE(AN) respectively.

For our empirical study, we replace the geometric covering intervals (GC) with the data streaming intervals (DS)~\cite{hazan07adaptive,gyorgy12efficient}.
Let $u(t)$ be a number such that $2^{u(t)}$ is the largest power of 2 that divides $t$; e.g., $u(12)=2$.
The data streaming intervals are $\cJ = \{[t..(t+g\cdot2^{u(t)}-1)]: t =1,2,\ldots \}$ for some $g \ge 1$.
DS is an attractive alternative, unlike GC, $(i)$ DS initiates one and only one black-box run at each time, and $(ii)$ it is more flexible in that the parameter $g$ can be increased to enjoy smaller regret in practice while increasing the time complexity by a constant factor.

For both ATV and CBCE, we set the prior $\bfpi$ over the black-box runs as the uniform distribution. 
Note that this does not break the theoretical guarantees since the number of black-box runs are never actually infinite; we used $\bar\bfpi$~\eqref{def:barpi} for ease of exposition.

\begin{figure*}
  {\centering
  \begin{tabular}{c}
    \includegraphics[width=.9\textwidth,valign=t]{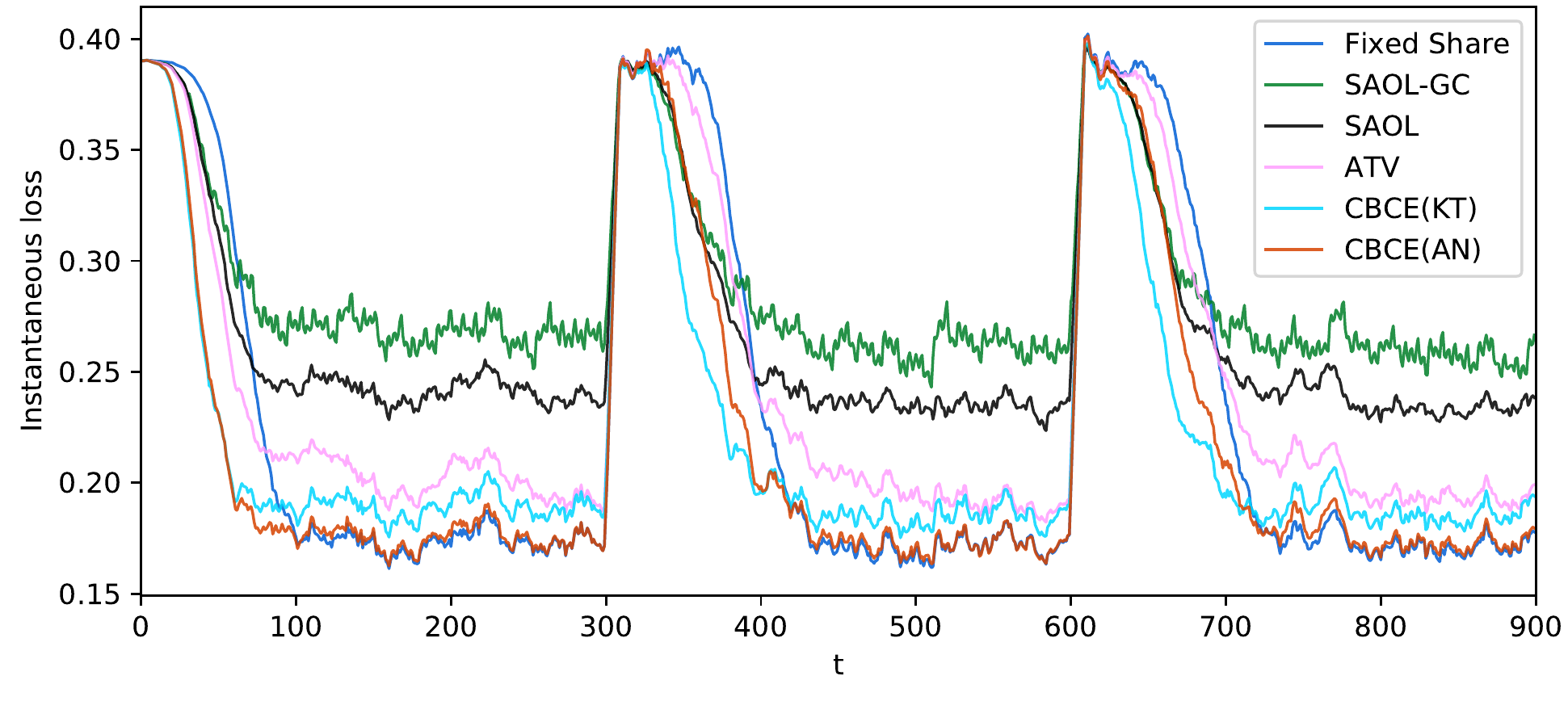} \\ (a) Learning with expert advice \\ 
    \includegraphics[width=.9\textwidth,valign=t]{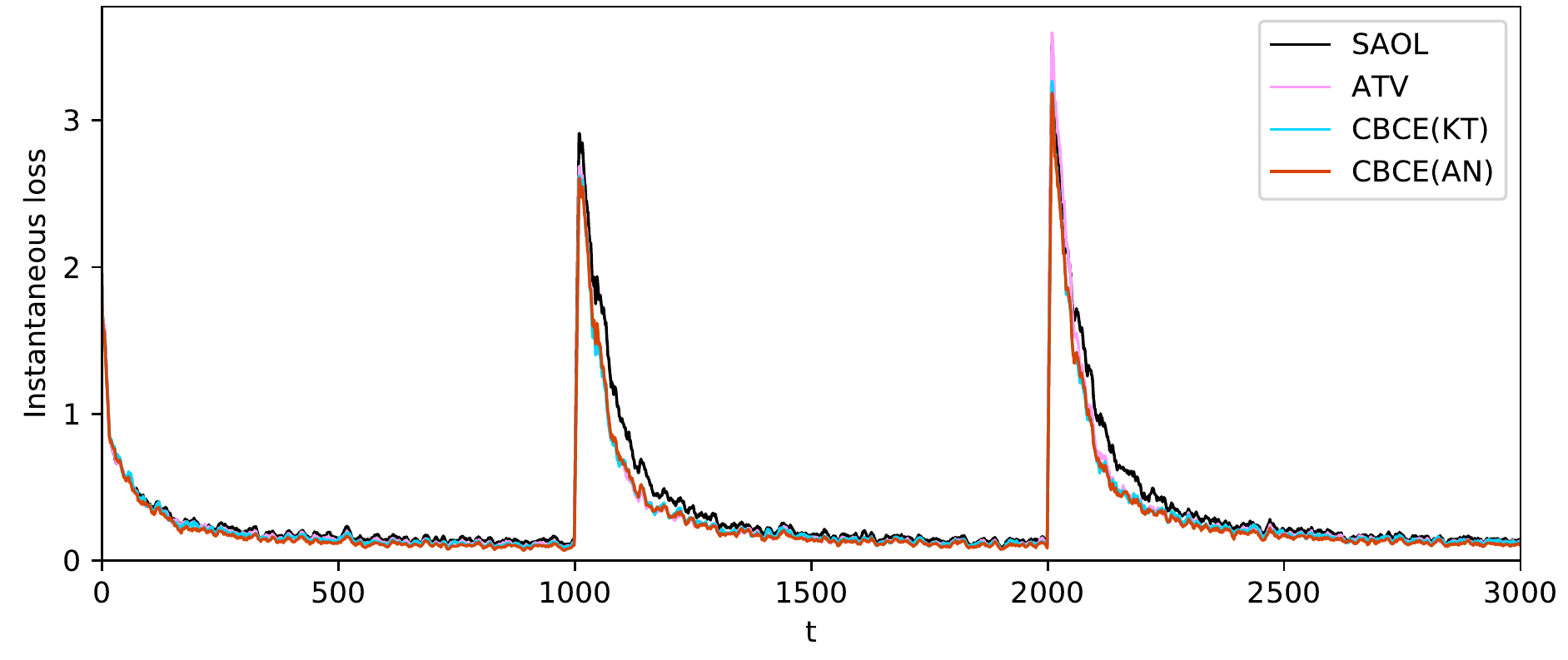} \\ (b) Metric learning\\
    \includegraphics[width=.9\textwidth,valign=t]{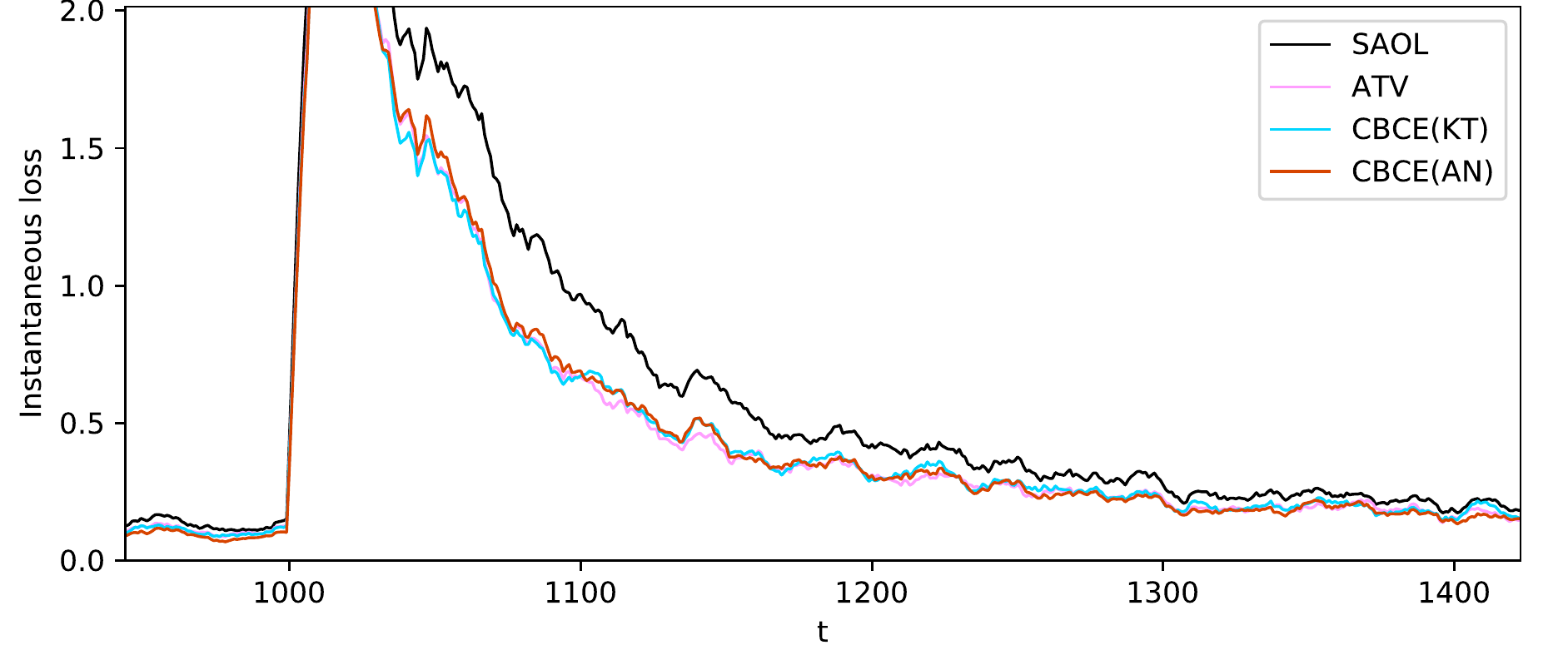} \\ (c) Metric learning (zoomed in)
  \end{tabular}
  \caption{Experiment results: Our method CBCE outperforms several baseline methods. }
  \label{fig:expr}
}
\end{figure*}

\subsection{Learning with Expert Advice (LEA)}

We consider LEA with linear loss.
That is, the loss function at time $t$ is $h_t(\p) = \bfl_t^\T \p$.
We draw linear loss $\bfl_t \in [0,1]^{N}, \forall t=1,\ldots,600$ for $N=1000$ experts by setting $\ell_{t,i}$ as the absolute value of an i.i.d. sample from $\cN(0,0.5^2)$.
Then, for time $t\in[1,300]$, we reduce loss of expert 1 by subtracting 1/2 from its loss: $\ell_{t,1} \larrow [\ell_{t,1} - 1/2]_+$.
For time $t\in[301,600]$ and $t\in[601,900]$, we perform the same for expert 2 and 3, respectively.
Thus, the best expert is 1, 2, and 3 for time segment [1,300], [301,600], and [601,900], respectively.
Finally, we cap $\ell_{t,i}$ below 1: $\ell_{t,i} \larrow \min\{\ell_{t,i},1\}$.
We use the data streaming intervals with $g=2$.
In all our experiments, DS with $g=2$ outperforms GC while spending roughly the same time.

For each meta algorithm, we use Sleeping CB with the AN potential as the black-box,\footnote{
  We also experimented with Sleeping CB with the KT potential, but we found that it works slightly worse than the AN potential in general. We omit this result here to avoid clutter.
} where $\cI_{t,i} = 1$ for all $t\ge1$ and $i\in[N]$ as there are no sleeping experts in this experiment.
We warm-start each black-box run at time $t\ge2$ by setting its prior to the decision $\p_{t-1}$ chosen by the meta algorithm at time step $t-1$.
We repeat the experiment 200 times and plot their average loss by computing moving mean with window size 10 in Figure~\ref{fig:expr}(a).
The overall winner is CBCE(AN).
While CBCE(KT) catches up with the environmental change faster than CBCE(AN), CBCE(AN) shows smaller loss than CBCE(AN) once the change settles down.
ATV is outperformed by both CBCEs but outperforms SAOL.
Note that SAOL with GC intervals (SAOL-GC) tends to incur larger loss than the SAOL with DS.
We observe that this is true for every meta algorithm, so we omit the result here to avoid clutter.
We also run Fixed Share using the parameters recommended by Corollary 5.1 of~\cite{cesa-bianchi06prediction}, which requires to know the target time horizon $T=900$ and the true number of switches $m=2$.
Such a strong assumption is often unrealistic in practice.
Furthermore, we observe that Fixed Share is the slowest in adapting to the environmental changes. 
Nevertheless, Fixed Share can be attractive since (i) after the switch has settled down its loss is competitive to CBCE(AN), and (ii) its time complexity ($O(NT)$) is lower than other algorithms ($O(NT\log T)$).

\subsection{Metric Learning}

We consider the problem of learning squared Mahalanobis distance from pairwise comparisons using the mirror descent algorithm~\cite{kunapuli12mirror}.
The data point at time $t$ is $(\z^{(1)}_t, \z^{(2)}_t, y_t)$, where 
$y_t \in\{1,-1\}$ indicates whether or not $\z^{(1)}_t \in \dsR^d$ and $\z^{(2)}_t \in \dsR^d$ belongs to the same class.
The goal is to learn a squared Mahalanobis distance parameterized by a positive semi-definite matrix $\M$ and a bias $\mu$ that have small loss $f_t([\M; \mu]) :=$
\begin{align*}
  [1 - y_t(\mu - (\z^{(1)}_t - \z^{(2)}_t)^\T \M (\z^{(1)}_t - \z^{(2)}_t))]_+ + \rho || \M ||_* \;,
\end{align*}
where $\mu$ is the bias parameter and $|| \cdot ||_*$ is the trace norm.
Such a formulation encourages predicting $y_t$ with large margin and low rank in $\M$.
A learned matrix $\M$ that has low rank can be useful in a number of machine learning tasks; e.g., distance-based classifications, clusterings, and low-dimensional embeddings.
We refer to~\cite{kunapuli12mirror} for details.

We create a scenario that exhibits shifts in the metric, which is inspired by~\cite{greenewald16nonstationary}.
Specifically, we create a mixture of three Gaussians in $\dsR^3$ whose means are well-separated, and mixture weights are .5, .3, and .2.
We draw 2000 points from it while keeping a record of their memberships.
We repeat this three times independently and concatenate these three vectors to have 2000 9-dimensional vectors.
Finally, we append to each point a 16-dimensional vector filled with Gaussian noise to have 25-dimensional vectors.
Such a construction implies that for each point there are three independent cluster memberships.
We run each algorithm for 1500 time steps.
For time 1 to 500, we randomly pick a pair of points from the data pool and assign $y_t=1$ $(y_t=-1)$ if the pair belongs to the same (different) cluster under the first clustering.
For time 501 to 1000 (1001 to 1500), we perform the same but under the second (third) clustering.
In this way, a learner must track the change in metric, especially the important low-dimensional subspaces for each time segment.

Since the loss of the metric learning is unbounded, we scale the loss by multiplying 1/5 and then capping it above at 1 as in~\cite{greenewald16nonstationary}.
Although the randomized decision discussed in Section~\ref{sec:cbce} can be used to maintain the theoretical guarantee, we stick to the weighted average since the event that the loss being capped at 1 is rare in our experiments.
As in our LEA experiment, we use the data streaming intervals with $g=2$ and initialize each black-box algorithm with the decision of the meta algorithm at the previous time step.
We repeat the experiment 200 times and plot their average loss in Figure~\ref{fig:expr}(b) by moving mean with window size 10.
While we observe that CBCE(KT), CBCE(AN), and ATV are indistinguishable (see Figure~\ref{fig:expr}(c)), all these methods outperform SAOL.
We have verified that visible gaps in Figure~\ref{fig:expr} are statistically significant.
This confirms the improved regret bound of CBCE and ATV.

\section{Future Work}
\label{sec:future}

Among a number of interesting directions, we are interested in reducing the time complexity in online learning within a changing environment.
For LEA, Fixed Share has the best time complexity.
However, Fixed Share is inherently not parameter-free; especially, it requires the knowledge of the number of shifts $m$. 
Achieving the best $m$-shift regret bound without knowing $m$ or the best SA-Regret bound in time $O(NT)$ would be an interesting future work.
The same direction is interesting for the online convex optimization (OCO) problem.
It would be interesting if an OCO algorithm such as online gradient descent can have the same SA-Regret as CBCE$\la$OGD$\ra$ without paying extra order of computation.

\section{Appendix}

\subsection{The Coin Betting Potential}
\label{sec:potential}
We precisely define the coin betting potential.
In this paper, we set $\eps=1$ throughout.
For technical reasons, we define the potential function that takes a form of $\bar{F}_t(x;y_{1:t})$.
We then define $F_t(y_{1:t}) := \bar{F}_t(\sum_{s=1}^t y_s; y_{1:t})$.
\begin{defn}\label{def:potential}
  \emph{(Coin Betting Potential~\cite{orabona16from})}
  Let $\eps>0$. 
  Let $\{\bar{F}_t\}_{t=0}^\infty$ be a sequence of functions $\bar{F}_t: (-a_t,a_t) \times [-1,1]^t \rarrow \dsR_+$ where $a_t > t$.
  The sequence $\{\bar{F}_t\}_{t=0}^\infty$ is called a \textbf{sequence of coin betting potentials for initial endowment $\eps$}, if it satisfies the following three conditions:

  \indent \emph{(a)} $\bar{F}_0(0; \cdot) = \eps$.

  \indent \emph{(b)} For every $t \ge 0$, $\bar{F}_t(x;y_{1:t})$ is even, logarithmically convex, strictly increasing on $[0,a_t)$ in the first argument, and $\lim_{x\rarrow a_t} \bar{F}_t(x;y_{1:t}) = +\infty$.
  
  \indent \emph{(c)} Define $\beta_t := \fr{\bar{F}_t(x+1; y_{1:t-1}, 1) - \bar{F}_t(x-1;y_{1:t-1}, -1)}{\bar{F}_t(x+1,y_{1:t-1}, 1) + \bar{F}_t(x-1,y_{1:t-1}, -1)}$.
  For every $t \ge 1$ every $x \in [-(t-1),(t-1)]$ and every $g\in[-1,1]$, 
  \[ 
    (1+g\beta_t) \bar{F}_{t-1}(x;y_{1:t-1}) \ge \bar{F}_t(x+g; y_{1:t-1}, g)\;.
  \]
\end{defn}

We now describe how the conditions for the coin betting potential lead to a lowerbound on the wealth: 
\[
  \Wealth_t \ge F_t\lt(  g_{1:t} \rt) 
\]
for any $g_1,g_2,\ldots,g_t \in [-1,1]$.
We use induction.
First, verify that $\Wealth_0 \ge F_0(\cdot) = \eps$, trivially.
Assuming $\Wealth_{t-1} \ge F_{t-1}(g_{1:t-1})$,
\begin{align*}
  \Wealth_t &= \Wealth_{t-1} + w_t g_t = (1 + g_t \beta_t) \Wealth_{t-1} \\
  &\ge (1 + g_t\beta_t) F_{t-1}\lt( g_{1:t-1} \rt)
  \stackrel{\text{Def.~\ref{def:potential}(c)}}{\ge} F_t\lt(g_{1:t-1}, g_t\rt) = F_t\lt( g_{1:t} \rt) \;.
\end{align*}

\subsection{SA-Regret Is Stronger Than \texorpdfstring{$m$}{}-Shift Regret}
\label{sec:saregret}

A strongly-adaptive regret bound can be turned into an $m$-shift
regret bound as follows.  Let $c>0$.  We claim that:
\begin{equation*} \begin{aligned}
  &\lt( \forall I=[I_1..I_2],\; R^\cA_I(\w) \le c\sqrt{ |I| \log (I_2)} \rt)  \\
  &\qquad\qquad \implies m\mbox{-Shift-Regret}_T^\cA \le c\sqrt{(m+1)T\log(T)} \;.
\end{aligned} \end{equation*}
To prove the claim, note that an $m$-shift sequence of experts
$\w_{1:T}$ can be partitioned into $m+1$ contiguous blocks denoted by
$I^{(1)}, \ldots, I^{(m+1)}$. For example, $(1,1,2,2,1)$ is 2-switch
sequence whose partition $\{ [1,2], [3,4], [5] \}$.  Denote by
$\w_{I^{(k)}} \in \cW$ the comparator in interval $I^{(k)}$: $ \w_t =
\w_{I^{(k)}}, \forall t\in I^{(k)}$.  Then, using Cauchy-Schwartz
inequality, we have
\begin{align} \label{regret-conversion}
  m\text{-Shift-Regret}_T^\cA  
&= \max_{\w_{1:T}: m\text{-shift seq.}} \sum_{k=1}^{m+1} R^\cA_{I^{(k)}} ( \w_{I^{(k)}} )  \notag \\
&\le \max_{\w_{1:T}: m\text{-shift seq.}} c\sqrt{\log T} \sum_{k=1}^{m+1} \sqrt{|I^{(k)}|} \notag\\
&\le \max_{\w_{1:T}: m\text{-shift seq.}} c\sqrt{\log T} \, \sqrt{m+1} \, \sqrt{\sum_{k=1}^{m+1} |I^{(k)}|} \notag \\
&= c\sqrt{\log T} \sqrt{m+1} \, \sqrt{T} \;.
\end{align}

\subsection{The Data Streaming Intervals Can Replace the Geometric Covering Intervals}
\label{sec:ds}

We show that the data streaming intervals achieves the same goal as the geometric covering intervals (GC).
Let $u(t)$ be a number such that $2^{u(t)}$ is the largest power of 2 that divides $t$; e.g., $u(12)=2$.
The data streaming intervals (DS) are 
\begin{align}\label{datastreaming}
\cJ = \{[t..(t+g\cdot2^{u(t)}-1)]: t =1,2,\ldots \} \;.
\end{align}
For any interval $J$, we denote by $J_1$ its starting time and by $J_2$ its ending time.
We say an interval $J'$ is a prefix of $J$ if $J'_1 = J_1$ and $J' \subseteq J$.

We show that DS also partitions an interval $I$ in Lemma~\ref{lem:datastreaming}.
\begin{lem}\label{lem:datastreaming}
  Consider $\cJ$ defined in~\eqref{datastreaming} with $g\ge1$.
  An interval $[I_1..I_2]\subseteq[T]$ can be partitioned to a sequence of intervals $\bar J^{(1)}, \bar J^{(2)}, \ldots, \bar J^{(n)}$ such that 
  \begin{enumerate}
    \item $\bar J^{(i)}$ is a prefix of some $J \in \cJ$ for $i=1,\ldots,n$.
    \item $|\bar J^{(i+1)}| / |\bar J^{(i)}| \ge 2$ for $i=1,\ldots,(n-1)$. 
  \end{enumerate}
\end{lem}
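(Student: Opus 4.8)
The plan is to follow the template of the proof of Lemma~\ref{lem:geo-intv}, but to exploit two features peculiar to the data streaming intervals: $(i)$ for every time $t$ the interval \emph{born} at $t$ is $[t..t+g\cdot2^{u(t)}-1]$, so the intervals of $\cJ$ sharing the left endpoint $t$ are nested and the set of admissible blocks starting at $t$ is exactly $\{[t..t+\ell-1]\,:\,1\le\ell\le g\cdot2^{u(t)}\}$ (every prefix of the born interval is usable); and $(ii)$ any interval $[a..b]$ with $a<b$ contains a \emph{unique} position $t^\star$ of maximal $2$-adic valuation $K:=u(t^\star)$, because two multiples of $2^K$ in $[a..b]$ would force a multiple of $2^{K+1}$ into $[a..b]$, contradicting the maximality of $K$.

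First I would set up an induction on $|I|$ that peels blocks off the \emph{right} end. Given $[I_1..I_2]$ with $I_1<I_2$, let $t^\star$ and $K$ be as in $(ii)$. Since $[I_1..I_2]$ cannot contain both consecutive multiples $t^\star$ and $t^\star+2^K$ of $2^K$, we get $I_2-t^\star+1\le 2^K\le g\cdot2^K$, so $[t^\star..I_2]$ is a prefix of the interval of $\cJ$ born at $t^\star$; take this as the last block $\bar J^{(n)}$. Then recurse on $[I_1..t^\star-1]$ (stopping when it is empty) and place the resulting sub-partition in front of $\bar J^{(n)}$; by construction every block is a prefix of some $J\in\cJ$, so condition~1 is automatic.

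The content is in verifying condition~2 at the junctions. The sub-interval $[I_1..t^\star-1]$ has maximal $2$-adic valuation at most $K-1$ — a multiple of $2^K$ inside it would be a second multiple of $2^K$ inside $[I_1..I_2]$ — so by the induction hypothesis every block produced for $[I_1..t^\star-1]$ has length at most $2^{K-1}$; in particular the rightmost such block has length at most $2^{K-1}$. When $I_2-t^\star+1=2^K$ this is $\le\tfrac12|\bar J^{(n)}|$, condition~2 holds at the junction, and the claim follows by induction; unwinding the recursion also gives $n=O(\log|I|)$ and $\sum_i\sqrt{|\bar J^{(i)}|}=O(\sqrt{|I|})$, which is all that the regret decomposition of Section~\ref{sec:meta} actually consumes. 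I expect the main obstacle to be the complementary situation $I_2-t^\star+1<2^K$: then $|\bar J^{(n)}|$ need not be twice the rightmost block coming from the recursion, so the clean ratio-$2$ guarantee can degrade near the start of time (for instance when the unique high-valuation point of $[I_1..I_2]$ is $I_2$ itself). Resolving this requires either spending the slack afforded by $g\ge1$ or treating such short/initial intervals directly — this is precisely the ``subtle case'' deferred to Section~\ref{sec:subtle} — and it perturbs the bounds by at most a constant factor, leaving the $O(\sqrt{|I|})$ and $O(\log|I|)$ conclusions that the SA-Regret proof needs intact.
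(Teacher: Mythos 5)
Your construction is correct and in fact produces exactly the same partition as the paper's, but you build it from the opposite end. The paper argues left-to-right: it takes the full interval of $\cJ$ born at $I_1$ (length $g\cdot 2^{u(I_1)}$), observes that the next start point $I_1+g\cdot 2^{u(I_1)}$ has strictly larger $2$-adic valuation (so the next block is at least twice as long), iterates, and truncates only the final block at $I_2$. You argue right-to-left, peeling off the block anchored at the unique maximal-valuation point $t^\star$ and recursing on $[I_1..t^\star-1]$; the same dyadic facts (uniqueness of the max-valuation point, strict drop of the maximal valuation in the left remainder) drive both arguments, and for $g=1$ the rightmost block of your recursion on $[I_1..t^\star-1]$ is exactly $[t^\star-2^{K'}..t^\star-1]$ of full length $2^{K'}$, so all internal junctions satisfy the ratio-$2$ condition. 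The paper's version is slightly more economical because the monotone-increase-of-valuation observation handles all junctions at once, whereas you need the small extra step (which you should make explicit in the induction hypothesis) that the last block returned by the recursion has full dyadic length $2^{K'}\le 2^{K-1}$.

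Two corrections to your closing discussion. First, the "obstacle" you identify at the final junction ($I_2-t^\star+1<2^K$, e.g.\ $[1..4]$ partitioning as $[1..1],[2..3],[4..4]$) is not a defect of your route that the paper avoids: the paper's proof has the identical issue, since it simply truncates $\bar J^{(n)}$ at $I_2$ and the text after the lemma concedes that the lengths "successively double \emph{except the last interval}." Condition~2 as literally stated can fail at $i=n-1$ under either construction; what the regret decomposition actually uses is only that $\bar J^{(1)},\ldots,\bar J^{(n-1)}$ double and $|\bar J^{(n)}|\le|I|$, which gives $\sum_i\sqrt{|\bar J^{(i)}|}=O(\sqrt{|I|})$, so you should not try to "spend the slack from $g\ge 1$" to repair it. Second, this is not the ``subtle case'' of Section~\ref{sec:subtle}: that section concerns black-box algorithms whose regret bound holds only at a prespecified horizon $T^\star$ (so that running them on a prefix of a DS interval is problematic), not the geometry of the partition; and the degradation occurs at the right end of $I$, not ``near the start of time.''
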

\begin{proof}
 For simplicity, we assume $g=1$; we later explain how the analysis can be extended to $g>1$.
 Let $I_1 = 2^u\cdot k$ where $2^u$ is the largest power of 2 that divides $I_1$.
 It follows that $k$ is an odd number.

 Let $J\in \cJ$ be the data streaming interval that starts from $I_1$.
 The length $|J|$ is $2^u$ by the definition, and $J_2$ is $I_1 + 2^u - 1$.
 Define $\bar J^{(1)} := J$.

 Then, consider the next interval $J'\in\cJ$ starting from time $I_1 + 2^u$.
 Note
 $$ J'_1 = I_1 + 2^u = 2^u \cdot k + 2^u = 2^{u+1} \cdot \fr{k+1}{2}  $$
 Note that $\fr{k+1}{2}$ is an integer since $k$ is odd.
 Therefore, $J'_1 = 2^{u'} \cdot k'$ where $u' > u$.
 It follows that the length of $J'$ is
 $$ |J'| = 2^{u'} \ge 2\cdot 2^u \;. $$
 Then, define $\bar J^{(2)} := J'$.

 We repeat this process until $I$ is completely covered by $\bar J^{(1)}, \ldots \bar J^{(n)}$ for some $n$.
 Finally, modify the last interval $\bar J^{(n)}$ to end at $I_2$ which is still a prefix of some $J\in \cJ$.
 This completes the proof for $g=1$.

 For the case of $g>1$, note that by setting $g>1$ we are only making the intervals longer.
 Observe that even if $g>1$, the sequence of intervals $\bar J^{(1)},\ldots,\bar J^{(n)}$ above are still prefixes of some intervals in $\cJ$.
\end{proof}

Note that, unlike the partition induced by GC in which interval lengths successively double then successively halve, the partition induced by DS just successively doubles its interval lengths except the last interval.
One can use DS to decompose SA-Regret of $\cM\la\cB\ra$; that is, in~\eqref{regret-decomposition}, replace $\sum_{i=-a}^b$ with $\sum_{i=1}^n$ and $J^{(i)}$ with $\bar J^{(i)}$.
Since the decomposition by DS has the same effect of ``doubling lengths', one can show that Theorem~\ref{thm:untitled} holds true with DS, too, with slightly smaller constant factors.

\subsection{A Subtle Difference between the Geometric Covering and Data Streaming Intervals}
\label{sec:subtle}

There is a subtle difference between the geometric covering intervals (GC) and the data streaming intervals (DS).

As far as the black-box algorithm has an anytime regret bound, both GC and DS can be used to prove the overall regret bound as in Theorem~\ref{thm:untitled}.
In our experiments, the blackbox algorithm has anytime regret bound, so using DS does not break the theoretical guarantee.

However, there exist algorithms with fixed-budget regret bounds only.
That is, the algorithm needs to know the target time horizon $T^*$, and the regret bound exists after exactly $T^*$ time steps only.
When these algorithms are used as the black-box, there is no easy way to prove Theorem~\ref{thm:untitled} with DS intervals.
The good news, still, is that most online learning algorithms are equipped with anytime regret bounds, and one can often use a technique called `doubling-trick'~\cite[Section~2.3]{cesa-bianchi06prediction} to turn an algorithm with a fixed budget regret into the one with an anytime regret bound.

\subsection{Technical Results}
\label{sec:technical}

\begin{lem}\label{lem:barZ_to_tilL}
  Assume $\Regret_T(\bfu) \le \sqrt{\lt( \xi + \sum_{i=1}^N u_i \barZ_{T,i}\rt) A(\bfu)}$ for some function $A$.
  Then, $\Regret_T(\bfu) = \sqrt{(\xi + 2\sum_{i=1}^N u_i \tilL_{t,i}) A(\bfu)} + A(\bfu)$.
\end{lem}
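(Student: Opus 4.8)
The plan is to convert the hypothesis into a scalar quadratic inequality in $\Regret_T(\bfu)$. The key step is a pointwise comparison between $|z_{t,i}|$ and the signed instantaneous regret, which here (following Corollary~\ref{cor:cblea-sleeping-exp}) is $r_{t,i} := \cI_{t,i}(h_t(\p_t) - \ell_{t,i})$. First I would carry out a case analysis using the definition of $g_{t,i}$ in Algorithm~\ref{alg:cblea-sleeping} together with $z_{t,i} = \cI_{t,i} g_{t,i}$: if $\cI_{t,i}=0$ then $z_{t,i}$, $r_{t,i}$, and $[-r_{t,i}]_+$ all vanish; if $\cI_{t,i}=1$ and $w_{t,i}>0$ then $z_{t,i}=r_{t,i}$, so $|z_{t,i}| = |r_{t,i}| = r_{t,i} + 2[-r_{t,i}]_+$; and if $\cI_{t,i}=1$ and $w_{t,i}\le 0$ then $z_{t,i}=[r_{t,i}]_+ = r_{t,i} + [-r_{t,i}]_+$. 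In all three cases $|z_{t,i}| \le r_{t,i} + 2[-r_{t,i}]_+$.

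Summing this over $t$ gives $\barZ_{T,i} \le \sum_{t=1}^T r_{t,i} + 2\tilL_{T,i}$, and forming the $u_i$-weighted sum over $i\in[N]$ while using $\sum_{t=1}^T\sum_{i=1}^N u_i r_{t,i} = \Regret_T(\bfu)$ yields
\[
  \sum_{i=1}^N u_i \barZ_{T,i} \;\le\; \Regret_T(\bfu) + 2\sum_{i=1}^N u_i \tilL_{T,i}\;.
\]
Substituting this into the assumed bound and abbreviating $R := \Regret_T(\bfu)$, $A := A(\bfu) \ge 0$, $L := \sum_{i=1}^N u_i \tilL_{T,i} \ge 0$, I obtain $R \le \sqrt{(\xi + R + 2L)A}$, hence the quadratic inequality $R^2 - AR - (\xi+2L)A \le 0$.

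It remains to solve this. If $R < 0$ the claimed bound holds trivially, since its right-hand side is nonnegative; otherwise the quadratic formula gives $R \le \tfrac{1}{2}\bigl(A + \sqrt{A^2 + 4(\xi+2L)A}\bigr)$, and bounding the discriminant by $\sqrt{A^2 + 4(\xi+2L)A} \le A + 2\sqrt{(\xi+2L)A}$ (via $\sqrt{x+y}\le\sqrt{x}+\sqrt{y}$) produces $R \le A + \sqrt{(\xi+2L)A}$, which is exactly the asserted bound (stated with an equality sign, though only the upper bound is meant and needed). The only part requiring care is the sign bookkeeping in the first step — matching the two branches of the definition of $g_{t,i}$ to the identities $[r_{t,i}]_+ = r_{t,i} + [-r_{t,i}]_+$ and $|r_{t,i}| = r_{t,i} + 2[-r_{t,i}]_+$, and confirming that the $\cI$-weighted reading of $r_{t,i}$ is the one consistent with both Algorithm~\ref{alg:cblea-sleeping} and the definition of $\Regret_T(\bfu)$; everything downstream is routine algebra.
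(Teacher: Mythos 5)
Your proof is correct and follows essentially the same route as the paper's: establish the key inequality $\sum_{i=1}^N u_i \barZ_{T,i} \le \Regret_T(\bfu) + 2\sum_{i=1}^N u_i \tilL_{T,i}$ by a case analysis on the sign of $w_{t,i}$ relating $|z_{t,i}|$ to $r_{t,i}$ and $[-r_{t,i}]_+$, then substitute into the hypothesis and solve the resulting quadratic inequality in $\Regret_T(\bfu)$ exactly as the paper does (including the subadditivity bound on the discriminant). Your pointwise three-case presentation of the first step is a cleaner rendering of the paper's indicator-function algebra, but it is the same argument.
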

\begin{proof}
  We closely follow the proof of~\citet[Theorem 2]{luo15achieving}.
  We first claim that $\sum_{i=1}^N u_i \barZ_{t,i} \le \Regret_T(\bfu) + 2 \sum_{i=1}^N u_i \tilL_{T,i}$.
  The proof is as follows:
  \begin{align*}
      &\sum_{i=1}^N u_i \barZ_{T,i}
    \\&= \sum_i u_i \sum_{t\in[T]} | r_{t,i} | \one\{w_{t,i}>0\} + \lt|[r_{t,i}]_+\rt| \one\{ w_{t,i}\le 0 \}
    \\&= \sum_i u_i \sum_{t} | r_{t,i} | \one\{w_{t,i}>0\} + [r_{t,i}]_+ \one\{ w_{t,i}\le 0 \}
    \\&= \sum_i u_i \sum_{t} r_{t,i} \one\{w_{t,i}>0 \wedge r_{t,i} > 0\} + (-r_{t,i}) \one\{w_{t,i}>0 \wedge r_{t,i} \le 0\} 
    \\&  \qquad\qquad\qquad + [r_{t,i}]_+ \one\{ w_{t,i}\le 0 \}
    \\&= \sum_i u_i \sum_{t} \blue{ r_{t,i} \one\{w_{t,i}>0 \wedge r_{t,i} > 0\} } + (-r_{t,i}) \one\{w_{t,i}>0 \wedge r_{t,i} \le 0\} 
    \\&  \qquad\qquad\qquad \blue{\;+\; [r_{t,i}]_+ \one\{ w_{t,i}\le 0 \} }
    \\&  \qquad\qquad\qquad \blue{ \;+\;  r_{t,i} \one\{w_{t,i}>0 \wedge r_{t,i} \le 0\} } - r_{t,i} \one\{w_{t,i}>0 \wedge r_{t,i} \le 0\}  
    \\&  \qquad\qquad\qquad + [-r_{t,i}]_+\one\{ w_{t,i} \le 0 \} \blue{ \;-\; [-r_{t,i}]_+\one\{ w_{t,i} \le 0 \} }
    \\&= \mathbin{\blue{ \sum_i u_i \sum_{t}r_{t,i} }} 
    \\&  \qquad + \sum_i u_i \sum_{t} 2(-r_{t,i}) \one\{w_{t,i}>0 \wedge r_{t,i} \le 0\} + [-r_{t,i}]_+\one\{ w_{t,i} \le 0 \}
    \\&= \Regret_T(\bfu)
       + \sum_i u_i \sum_{t} 2[-r_{t,i}]_+ \one\{w_{t,i}>0\} + [-r_{t,i}]_+\one\{ w_{t,i} \le 0 \}
    \\&\le \Regret_T(\bfu) + 2 \sum_i u_i \tilL_{T,i}
  \end{align*}
  Let us simply use the notations $R$ in place of $\Regret_T(\bfu)$, $A$ in place of $A(\bfu)$, and $\tilL$ in place of $\sum_i u_i \tilL_{T,i}$.
  It is safe to assume that $R \ge 0$ since otherwise the statement of the Theorem is trivial.
  Then, by the assumption of the theorem,
  \begin{align*}
    R &\le \sqrt{(\xi + R + 2 L)A}
    \\\iff R^2 &\le (\xi + R + 2 L)A
    \\\implies R &\le \fr{1}{2}(A + \sqrt{A^2 + 4(\xi+2L)A})
    \\           &\le \fr{1}{2}(A + A + 2\sqrt{(\xi+2L)A})
    \\           &= A + \sqrt{(\xi+2L)A} \;.
  \end{align*}
\end{proof}



\section*{Acknowledgments}
This work was supported by NSF Award IIS-1447449 and NIH Award 1 U54
AI117924-01.  The authors thank Andr\'as Gy\"orgy for providing
constructive feedback and Kristjan Greenewald for providing the metric
learning code.

{\small
\bibliographystyle{icml2017_kwang}
\bibliography{library-shared}
}

\end{document}